\documentclass{article}
\usepackage{algorithmic}

 \usepackage[preprint]{neurips_2026}

\usepackage[utf8]{inputenc} 
\usepackage[T1]{fontenc}    

\usepackage{hyperref}       
\usepackage{url}            
\usepackage{booktabs}       
\usepackage{amsfonts}       
\usepackage{xfrac}          
\providecommand{\nicefrac}[2]{\sfrac{#1}{#2}}
\usepackage{microtype}      
\usepackage{xcolor}         
\usepackage{graphicx}
\providecommand{\cdashline}[1]{\cline{#1}}
\IfFileExists{multirow.sty}{\usepackage{multirow}}{\providecommand{\multirow}[3]{##3}}
\usepackage{multicol}
\IfFileExists{makecell.sty}{%
  \usepackage{makecell}%
}{%
  \providecommand{\makecell}[2][]{\begin{tabular}{@{}c@{}}##2\end{tabular}}%
}
\usepackage{array}

\usepackage{booktabs} 

\usepackage{hyperref}
\IfFileExists{adjustbox.sty}{%
  \usepackage{adjustbox}%
}{%
  \newenvironment{adjustbox}[1]{}{}%
}

\usepackage{amsmath}
\usepackage{amssymb}
\usepackage{mathtools}
\usepackage{amsthm}
\IfFileExists{cancel.sty}{%
  \usepackage{cancel}%
}{%
  \providecommand{\bcancel}[1]{##1}%
}
\usepackage{tikz}
\usepackage{algorithm}
\usepackage[font=small]{caption}
\IfFileExists{wrapfig.sty}{%
  \usepackage{wrapfig}%
}{%
  \newenvironment{wrapfigure}[2]{\begin{figure}[t]}{\end{figure}}%
}
\usepackage{caption}
\IfFileExists{cleveref.sty}{%
  \usepackage[capitalize,noabbrev]{cleveref}%
}{%
  \providecommand{\Cref}[1]{\ref{##1}}%
  \providecommand{\cref}[1]{\ref{##1}}%
}

\theoremstyle{plain}
\newtheorem{theorem}{Theorem}[section]
\newtheorem{proposition}[theorem]{Proposition}
\newtheorem{lemma}[theorem]{Lemma}

\theoremstyle{definition}

\newtheorem{assumption}[theorem]{Assumption}
\theoremstyle{remark}
\newtheorem{remark}[theorem]{Remark}

\newenvironment{talign*}
{\csname align*\endcsname}
{\endalign}

\title{Outlier-robust Diffusion Posterior Sampling for Bayesian Inverse Problems}

\author{%
  Yiming Yang$^{1}$\thanks{First Author: Email Yiming Yang to \url{zcahyy1@ucl.ac.uk}.}  \quad
  Xiaoyuan Cheng$^{1}$ \quad
  Yi He$^{1}$ \quad
  Kaiyu Li$^{1}$ \quad
  Wenxuan Yuan$^{2}$ \quad
  Zhuo Sun$^{3,2}$\thanks{Corresponding Author. Correspondence to Zhuo Sun: \url{sunzhuo@mail.shufe.edu.cn}.}\\
  $^{1}$University College London, \quad
  $^{2}$Imperial College London,\\
  $^{3}$Shanghai University of Finance and Economics 
}

\begin{document}

\maketitle

\begin{abstract}
  Diffusion models have emerged as powerful learned priors for Bayesian inverse problems (BIPs). Diffusion-based solvers rely on a presumed likelihood for the observations in BIPs to guide the generation process. Likelihood misspecification is common in practical BIPs and is known to degrade recovery performance, particularly under outlier contamination. We investigate this problem by first characterizing the induced posterior deviation and proving the \emph{stability} of diffusion-based solvers for linear BIPs. Our stability analysis further reveals potential robustness deficiencies of existing diffusion-based solvers under outlier-contaminated measurements. To address this issue, we propose a simple yet effective solution: \emph{robust diffusion posterior sampling}, which is provably \emph{outlier-robust} for linear BIPs and compatible with existing gradient-based posterior samplers. Empirical results from scientific inverse problems and natural image tasks demonstrate the effectiveness and robustness of our method, with consistent performance gains in challenging scenarios involving outlier contamination for both linear and nonlinear tasks.
\end{abstract}

\section{Introduction}
\label{sec: introduction}
The goal of Inverse Problems (IPs) is to recover the unknown variable $\boldsymbol{x}$ given noisy measurements $\boldsymbol{y}=F(\boldsymbol{x}) + \boldsymbol{\epsilon}$. Because measurements are limited and noisy, IPs are often ill-posed \citep{de2005learning}. IPs appear in various domains, such as medical imaging \citep{song2021solving}, geophysics \citep{sambridge2002monte}, and beyond. Bayesian inference mitigates the ill-posedness via a prior $p(\boldsymbol{x})$ to regularize the solution based on prior knowledge. The likelihood $p(\boldsymbol{y}|\boldsymbol{x})$ quantifies how well $F(\boldsymbol{x})$ explains the measurements \(\boldsymbol{y}\). Applying Bayes’ rule yields the posterior distribution $p(\boldsymbol{x}|\boldsymbol{y})$. This formulation is referred to as a Bayesian inverse problem (BIP), and the posterior is viewed as the solution \citep{sullivan2015introduction}.

Despite the simple formulation of BIPs, determining a suitable prior $p(\boldsymbol{x})$ is both critical and challenging, particularly for high-dimensional complex data distributions. Apart from prior choices in existing works \citep{knapik2011bayesian, wang2017bayesian, hosseini2017well,  chung2023diffusion, cardoso2024monte}, probabilistic diffusion models (DMs) \citep{sohl2015deep,song2019generative,ho2020denoising,song2021score} have emerged as powerful data-driven priors for solving inverse problems across various scenarios and applications \citep{zheng2025inversebench}. DMs identify the target data distribution by reversing a forward diffusion process via a learned time-dependent score function $\nabla_{\boldsymbol{x}_t}\log{p(\boldsymbol{x}_t)}$, which progressively transports samples from Gaussian noise at time $T$ to data samples from the target distribution $p(\boldsymbol{x}_0)$. 
Unlike standard BIPs, applying diffusion priors requires estimating intermediate posteriors $p(\boldsymbol{x}_t|\boldsymbol{y})$ along the reverse diffusion process. These intermediate distributions are generally intractable. A widely adopted approach to address the intractability relies on \textit{Tweedie’s formula} \citep{miyasawa1961empirical} to obtain an estimate of the denoised conditional $p(\boldsymbol{x}_0|\boldsymbol{x}_t)$, effectively providing a point estimate (or a local Gaussian approximation) \citep{song2021denoising,meng2021estimating,chung2023diffusion,song2023pseudoinverse,boys2024tweedie}. This allows for a tractable likelihood approximation that combines with the diffusion prior to estimate intermediate posteriors and progressively solves the IP; see \citep{daras2024survey} for a review.
\begin{figure*}
    \centering    
    \includegraphics[width=\linewidth]{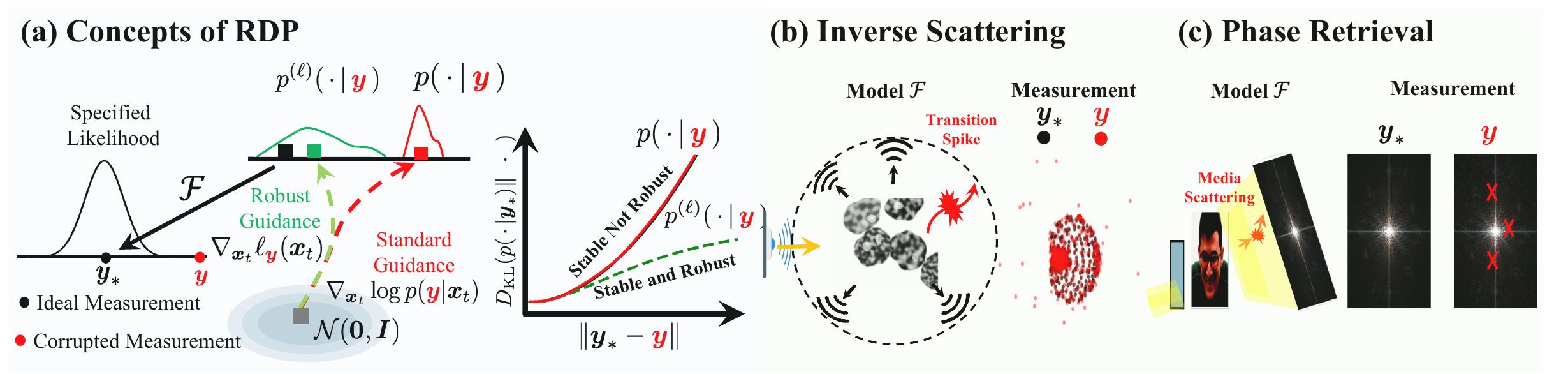}
    \caption{\small Overview. (a) Our proposed method achieves robust recovery under corruption. Visualization of the measurement model and data corruption, showing the sources of measurement corruption in (b) inverse scattering and (c) phase retrieval.}
    \label{fig: demo}
\end{figure*}

\paragraph{On Stability for Diffusion-Based Solvers.} 
Diffusion-based solvers have shown strong empirical performance for BIPs, but their stability with respect to measurement error remains less explicitly characterized. Prior works analyze surrogate likelihoods based on Tweedie approximations \citep{chung2023diffusion,song2023loss}, while others study stability under forward-model shifts in linear BIPs \citep{renaud2024plugandplay} or sampling stability for inexact Langevin-type algorithms \citep{renaud2026from}. Complementary to these results, we study stability under measurement perturbations. Specifically, we show that the posterior induced by DPS \citep{chung2023diffusion} is stable under measurement error and quantify the posterior deviation, which scales with the magnitude of measurement noise.

\paragraph{On Robustness for Diffusion-Based Solvers under Outlier Contamination.} A finding from our stability results indicates the potential lack of robustness of existing diffusion-based methods for BIPs under outlier contamination. Prior work has studied sensitivity to denoiser shifts and forward-model shifts in linear BIPs~\citep{renaud2024plugandplay}. 
However, robustness to outlier contamination in the measurement noise remains largely underexplored.  In BIPs, the likelihood $p(\boldsymbol{y}|\boldsymbol{x})$ is determined by the measurement model \(F\) and an assumed noise distribution \(\boldsymbol{\epsilon}\) which can be misspecified in practice. For example, in optical tasks such as inverse scattering and phase retrieval, speckle noise or unexpected scattering from inhomogeneous media can produce heavy-tailed noise or outlier contamination, thereby violating the common Gaussian assumptions (see Figure~\ref{fig: demo}(b)–(c)). In this setting, stable but non-robust diffusion-based solvers can drive the posterior far from the target one. One way to deal with likelihood misspecifications is to temper the likelihood with heuristic \emph{temperatures} \citep{zhu2023denoising,chung2023diffusion,song2023pseudoinverse,song2023loss,wu2023practical,cardoso2024monte}, thereby globally down-weighting the likelihood term in the reverse diffusion processes. However, this global control is intended to stabilize the reverse sampling dynamics rather than explicitly mitigate the effect of likelihood misspecifications for BIPs.

\paragraph{Contributions.}
To address this problem, we provide a systematic investigation through theoretical and empirical analysis. We summarize our contributions as follows: 
\begin{itemize}
    \item We establish observations and theoretical results on the \emph{stability of diffusion posterior sampling} for linear Bayesian inverse problems under measurement noise, providing insight into their empirical effectiveness.
    
    \item We show that this stability analysis also reveals a vulnerability under outlier contamination. Motivated by this finding, we introduce \emph{robust diffusion posterior sampling} (RDP), a GBI-inspired framework for robust BIPs. RDP serves as a plug-and-play robust guidance module for existing gradient-based diffusion samplers, as illustrated in Figure~\ref{fig: demo}(a), and provides theoretical guaranties of \emph{outlier robustness}.
    
    \item We empirically validate our theory across linear and nonlinear scientific inverse problems and image restoration tasks. The results show that RDP mitigates the outlier contamination visualized in Figure~\ref{fig: demo}(b,c), improves performance under likelihood misspecification, and maintains competitive performance in well-specified settings.
\end{itemize}


\section{Background}
\label{sec: background}
In this section, we provide the background on BIPs and diffusion-based solvers for these problems.
 
\subsection{Bayesian Inverse Problems}
\label{subsec: BIP}
An inverse problem aims to recover the unknown variables
$\boldsymbol{x}\in\mathcal{X}\subset\mathbb{R}^{d_x}$ with compact \(\mathcal{X}\) from noisy observations $\boldsymbol{y}\in\mathbb{R}^{d_y}$ given by:
\begin{equation}
    \label{eq: general_inverse_problem}
    \boldsymbol{y}=F(\boldsymbol{x}) + \boldsymbol{\epsilon},
\end{equation}
where $F\colon\mathcal{X}\rightarrow\mathbb{R}^{d_y}$ denotes the measurement model, and $\boldsymbol{\epsilon}\in\mathbb{R}^{d_y}$ denotes the additive measurement noise \citep{sullivan2015introduction}. We consider linear inverse problems, i.e. $F \in \mathbb{R}^{d_y\times d_x}$ for theoretical analysis in this work; though we also observe that our method shows promising empirical results for nonlinear BIPs as well (see \Cref{sec: experiments}). Typically $d_x \gg d_y$, the inverse problem is under-determined and requires regularization from prior knowledge.

Accordingly, the BIP models the unknown \(\boldsymbol{x}\) as a random variable by specifying
a prior \(p(\boldsymbol{x})\) on \(\mathcal{X}\) and a likelihood \(p(\boldsymbol{y}|\boldsymbol{x})\) induced by 
\eqref{eq: general_inverse_problem} \citep{stuart2010inverse,sullivan2015introduction}. In this paper, we focus on the commonly used Gaussian likelihood, which factorizes across measurement components, i.e., \(p(\boldsymbol{y}|\boldsymbol{x})=\prod_{i=1}^{d_y}p(y_i|\boldsymbol{x})\). Given an observation $\boldsymbol{y}$, Bayes’ rule yields the posterior 
\(p(\boldsymbol{x}|\boldsymbol{y})\propto p(\boldsymbol{y}|\boldsymbol{x})\,p(\boldsymbol{x}).
\) However, traditional priors, such as the standard Gaussian, are often inadequate for representing the complex distributions of real-world data. Diffusion models, instead, offer a compelling choice for BIPs by learning expressive, data-driven priors \citep{song2021solving, song2021score}.

\subsection{Diffusion Models for Inverse Problems}
\label{subsec: Diffusion Models for Inverse Problems}
Diffusion models provide a powerful framework for learning complex, high-dimensional distributions. In this work, we adopt the continuous-time Variance Preserving (VP) formulation. The forward process transforms a data sample $\boldsymbol{x}_0$ into Gaussian noise over $t\in[0,T]$, governed by the stochastic differential equation (SDE)
$\mathrm{d}\boldsymbol{x}_t = -\frac{1}{2}\beta(t)\boldsymbol{x}_t\,\mathrm{d}t+\sqrt{\beta(t)} \mathrm{d}\boldsymbol{w}$,
where $\beta(t)$ is a predefined noise schedule and $\boldsymbol{w}_t$ is a standard Wiener process. The key insight is that the forward process can be reversed in time using: 
\begin{align}
 \label{eq: reverse SDE}   
 \mathrm{d}\boldsymbol{x}_t = -\beta(t)[\frac{\boldsymbol{x}_t}{2}+\nabla_{\boldsymbol{x}_t}\log{p(\boldsymbol{x}_t)}]\mathrm{d} t+\sqrt{\beta(t)}\mathrm{d}\boldsymbol{w}_t.
\end{align}
The critical component required to solve the reverse SDE is the score function, $\nabla_{\boldsymbol{x}_t}\log{p_t(\boldsymbol{x}_t)}$, which is approximated by a time-dependent neural network $\boldsymbol{s}_{\boldsymbol{\theta}}(\boldsymbol{x}_t,t)$. This network is trained via denoising score matching, minimizing the expected squared error between the predicted and true conditional scores. Once trained, the score network $\boldsymbol{s}_{\boldsymbol{\theta}}$ is used to generate samples by solving the reverse SDE, starting from $\boldsymbol{x}_T\sim\mathcal{N}(\boldsymbol{0},\boldsymbol{I})$.

\vspace{-0.5em}
\paragraph{Posterior Sampling with Diffusion Models.} Solving a BIP requires sampling from the posterior $p(\boldsymbol{x}_0|\boldsymbol{y})$. This can be achieved by guiding the reverse SDE using the score of the posterior distribution at each time step $\nabla_{\boldsymbol{x}_t}\log{p_t(\boldsymbol{x}_t|\boldsymbol{y})}$. Using Bayes' theorem, this posterior score can be decomposed into a prior and a likelihood:
\begin{equation*}
    \label{eq: guided reverse SDE}
    \nabla_{\boldsymbol{x}_t}\log p_t(\boldsymbol{x}_t|\boldsymbol{y}) = \underbrace{\nabla_{\boldsymbol{x}_t}\log p_t(\boldsymbol{x}_t)}_{\text{Prior Score}} + \underbrace{\nabla_{\boldsymbol{x}_t}\log p(\boldsymbol{y}|\boldsymbol{x}_t)}_{\text{Likelihood Score}}.
\end{equation*}
The prior score is readily available from the pre-trained score network, i.e., $\boldsymbol{s}_{\boldsymbol{\theta}}(\boldsymbol{x}_t,t)$. The likelihood term is difficult to derive, as the time-dependent likelihood function $p(\boldsymbol{y}|\boldsymbol{x_t})=\int p(\boldsymbol{y}|\boldsymbol{x}_0)p(\boldsymbol{x}_0|\boldsymbol{x}_t)\,\mathrm{d}\boldsymbol{x}_0=\mathbb{E}\left[p(\boldsymbol{y}|\boldsymbol{x}_0)\right]$ is not tractable in general.

An efficient strategy is to approximate the intractable likelihood $p(\boldsymbol{y}|\boldsymbol{x}_t)$ with a surrogate $p(\boldsymbol{y}|\boldsymbol{x}_t)\approx \tilde{p}(\boldsymbol{y}|\hat{\boldsymbol{x}}_0(\boldsymbol{x}_t))$, which is the likelihood evaluated at \(\hat{\boldsymbol{x}}_0(\boldsymbol{x}_t)\), the estimate of \(\boldsymbol{x}_0\) at step $t$ from the diffusion model \citep{stein1981estimation, efron2011tweedie,graikos2022diffusion, chung2023diffusion}. We refer to this surrogate likelihood as \textit{DPS} likelihood. While the DPS approximation has proven effective for well-specified likelihood models \citep{song2021denoising,ho2022video, wu2023practical,song2023loss}, its robustness to likelihood misspecifications remains a concern. Likelihood misspecifications can arise from the misspecified noise distribution or from the presence of a contaminated subset of observations generated under a different data-generating mechanism. In this work, we investigate the impact of such misspecifications and focus on the following questions:
\begin{center}
    \textit{Is diffusion posterior sampling robust to outlier contamination for linear BIPs? If not, can we propose a solution that guaranties robust recovery without losing effectiveness?}
\end{center}

\section{Method}
\label{sec: method}
This section begins by establishing the stable properties of the standard diffusion sampler, which also shows its vulnerability to the likelihood misspecification discussed in Section \ref{subsec:stability}. We then formulate a robust strategy with theoretical guaranties and detail its practical implementation for general BIPs with diffusion solvers in Section \ref{subsec:RDP}. We discuss how our robust approach performs in \emph{well-specification} settings.  
We consider observations contaminated by outliers: \(
\boldsymbol{y} = F\boldsymbol{x} + \boldsymbol{\epsilon},
\)
where \(\boldsymbol\epsilon\) may contain both nominal measurement noise and sparse, large-magnitude outlier corruption. Hereby, the method achieves the design of a posterior that is robust to the presence of outliers.

\subsection{Stability and Robustness of Diffusion Posterior}
\label{subsec:stability}
Following the likelihood approximation in Section \ref{subsec: Diffusion Models for Inverse Problems}, the likelihood score is evaluated as 
\(\nabla_{\boldsymbol{x}_t}\log{p(\boldsymbol{y}|\boldsymbol{x}_t)}\approx \nabla_{\boldsymbol{x}_t}\log{\tilde{p}}(\boldsymbol{y}|\hat{\boldsymbol{x}}_0(\boldsymbol{x}_t)).
\)
The \(\hat{\boldsymbol{x}}_0(\boldsymbol{x}_t)\) is the denoised estimate of the clean data given $\boldsymbol{x}_t$, defined by: \[\hat{\boldsymbol{x}}_0(\boldsymbol{x}_t)=\mathbb{E}(\boldsymbol{x}_0|\boldsymbol{x}_t)\approx\frac{1}{\sqrt{\alpha(t)}}(\boldsymbol{x}_t+(1-\alpha(t))\boldsymbol{s}_{\boldsymbol{\theta}}(\boldsymbol{x}_t,t)),
\]
where \(\alpha(t)=\exp{(-\int_{0}^t\beta(\tau)\mathrm{d}\tau})\).
Then, one can perform posterior sampling by solving the reverse SDE in Equation \ref{eq: reverse SDE}, substituting the unconditional score \(\nabla_{\boldsymbol{x}_t}\log{p(\boldsymbol{x}_t)}\) with the approximated posterior score \(\nabla_{\boldsymbol{x}_t}\log p(\boldsymbol{x}_t|\boldsymbol{y})=\boldsymbol{s}_{\boldsymbol{\theta}}(\boldsymbol{x}_t,t)+\nabla_{\boldsymbol{x}_t}\log{\tilde{p}}(\boldsymbol{y}|\hat{\boldsymbol{x}}_{0}(\boldsymbol{x}_t))\). While \cite{chung2023diffusion} and \cite{song2023loss} analyze the DPS likelihood error, their link to the resulting posterior is indirect. The following Lemma \ref{lemma: stability of DPS} bridges this gap by establishing the stability of the DPS solver.
\begin{lemma}
\label{lemma: stability of DPS}
Consider the posterior \(p_\delta(\cdot\mid\boldsymbol{y})\) induced by the DPS solver under assumptions~\ref{appendix assumption: Score Network Regularity}--\ref{appendix assumption: Score Network Accuracy} in Appendix \ref{appendix: Proof of main results} with the terminal stopping time \(\delta\). Suppose the likelihood is linear Gaussian
\(
\boldsymbol{y}\mid\boldsymbol{x}
\sim
\mathcal{N}(F\boldsymbol{x},\sigma_y^2\boldsymbol{I}).
\)
Then, for any finite noisy measurement \(\boldsymbol{y}\in\mathbb{R}^{d_y}\) and any \(\boldsymbol{x}\in\mathcal{X}\), let the corresponding noiseless measurement be \(\boldsymbol{y}_* = F\boldsymbol{x}\). There exists a finite constant \(C_{\mathrm{stable}}>0\) such that
\begin{align}
    \label{eq: stability}
    D_{\mathrm{KL}}
    \left(
    p_\delta(\cdot\mid\boldsymbol{y}_*)
    \,\Vert\,
    p_\delta(\cdot\mid\boldsymbol{y})
    \right)
    \le
    C_{\mathrm{stable}}
    \|\boldsymbol{y}_*-\boldsymbol{y}\|_2^2 .
\end{align}
\vspace{-15pt}
\end{lemma}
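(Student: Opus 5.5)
We compare two reverse-time diffusions that differ only in their observation argument. We control their KL divergence by Girsanov's theorem, and then show that the difference in drifts is linear in $\boldsymbol{y}_*-\boldsymbol{y}$.

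Write $\tilde{\boldsymbol{s}}(\boldsymbol{x}_t,t;\boldsymbol{y}) = \boldsymbol{s}_{\boldsymbol{\theta}}(\boldsymbol{x}_t,t)+\nabla_{\boldsymbol{x}_t}\log\tilde p(\boldsymbol{y}\mid\hat{\boldsymbol{x}}_0(\boldsymbol{x}_t))$ for the guided score. The DPS posterior $p_\delta(\cdot\mid\boldsymbol{y})$ is the law at time $\delta$ of the reverse SDE~\eqref{eq: reverse SDE} with $\nabla\log p_t$ replaced by $\tilde{\boldsymbol{s}}(\cdot,t;\boldsymbol{y})$, started from $\mathcal{N}(\boldsymbol{0},\boldsymbol{I})$ at time $T$. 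The processes for $\boldsymbol{y}_*$ and $\boldsymbol{y}$ have the same initial law and the same diffusion coefficient $\sqrt{\beta(t)}$. By the data-processing inequality, the KL divergence between the time-$\delta$ marginals is at most the KL divergence between the path measures on $[\delta,T]$. Girsanov's theorem (Novikov's condition follows from the boundedness established below) gives
\begin{align*}
D_{\mathrm{KL}}\big(p_\delta(\cdot\mid\boldsymbol{y}_*)\,\Vert\,p_\delta(\cdot\mid\boldsymbol{y})\big)
\le \frac12\int_\delta^T \beta(t)\,\mathbb{E}_{\boldsymbol{x}_t\sim \mathbb{P}^{\boldsymbol{y}_*}}\big\|\tilde{\boldsymbol{s}}(\boldsymbol{x}_t,t;\boldsymbol{y}_*)-\tilde{\boldsymbol{s}}(\boldsymbol{x}_t,t;\boldsymbol{y})\big\|_2^2\,\mathrm{d}t .
\end{align*}
The prior score $\boldsymbol{s}_{\boldsymbol{\theta}}$ cancels in the difference. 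For the linear Gaussian likelihood,
\[
\nabla_{\boldsymbol{x}_t}\log\tilde p(\boldsymbol{y}\mid\hat{\boldsymbol{x}}_0)=\sigma_y^{-2}\,J_t(\boldsymbol{x}_t)^\top F^\top(\boldsymbol{y}-F\hat{\boldsymbol{x}}_0(\boldsymbol{x}_t)),
\]
where $J_t=\partial\hat{\boldsymbol{x}}_0/\partial\boldsymbol{x}_t$ is the Jacobian of the denoiser. Taking the difference between the two observations removes the $F\hat{\boldsymbol{x}}_0$ term, so the drift gap is exactly
\[
\sigma_y^{-2}J_t(\boldsymbol{x}_t)^\top F^\top(\boldsymbol{y}_*-\boldsymbol{y}),
\]
which is linear in the measurement perturbation. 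Its squared norm is at most $\sigma_y^{-4}\|F\|_2^2\,\|J_t(\boldsymbol{x}_t)\|_2^2\,\|\boldsymbol{y}_*-\boldsymbol{y}\|_2^2$.

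It remains to bound $\|J_t\|_2$ uniformly on $[\delta,T]$. From Tweedie's formula,
\[
J_t=\alpha(t)^{-1/2}\big(\boldsymbol{I}+(1-\alpha(t))\nabla_{\boldsymbol{x}}\boldsymbol{s}_{\boldsymbol{\theta}}(\boldsymbol{x}_t,t)\big).
\]
I would use the score-network regularity assumption (a uniform Lipschitz bound $L$ on $\boldsymbol{s}_{\boldsymbol{\theta}}$ in $\boldsymbol{x}$ on $[\delta,T]$) together with $\alpha(t)\ge\alpha(T)>0$ to obtain $\|J_t\|_2\le \alpha(T)^{-1/2}(1+L)$. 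Substituting this bound gives
\[
C_{\mathrm{stable}}=\frac{\|F\|_2^2(1+L)^2}{2\sigma_y^4\alpha(T)}\int_\delta^T\beta(t)\,\mathrm{d}t<\infty,
\]
independent of $\boldsymbol{x}$ and $\boldsymbol{y}$, as required.

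The main obstacle is the rigorous justification of the Girsanov step. It requires well-posedness of both guided SDEs and Novikov's condition. It also requires that the drift-gap bound holds along trajectories that may leave the compact set $\mathcal{X}$; in particular, $\boldsymbol{x}_t$ is unbounded. This is why I would rely on a global Lipschitz (or bounded-Jacobian) assumption on $\boldsymbol{s}_{\boldsymbol{\theta}}$ rather than a local one. The drift gap is then deterministic and bounded, which makes Novikov's condition immediate. Stopping at $\delta>0$ keeps the constant finite despite the possible blow-up of the score as $t\to0$. The score-accuracy assumption is not needed for this bound, except to guarantee that the guided SDE is well posed.
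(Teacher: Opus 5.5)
Your proposal is correct and follows the paper's proof essentially step for step. The score network cancels in the drift gap. The chain rule through the Tweedie Jacobian, combined with the linear-Gaussian likelihood, gives the gap $\sigma_y^{-2}J_t^\top F^\top(\boldsymbol{y}_*-\boldsymbol{y})$. The Jacobian is bounded via the score-regularity assumption and $\alpha(t)\ge\alpha(T)$, and Girsanov with identical Gaussian initial laws yields $C_{\mathrm{stable}}=\tfrac12 L_{\mathrm{lld}}^2L_{\hat{\boldsymbol{x}}_0}^2\int_\delta^T\beta(t)\,\mathrm{d}t$; your $\alpha(T)^{-1}$ factor is marginally tighter than the paper's $e^{T\beta_{\max}}$.

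Your observation that the deterministic drift-gap bound makes Novikov's condition immediate is cleaner than the paper's appeal to square integrability. One correction to your closing remark: the score-accuracy assumption does not give well-posedness of the guided SDE, and neither your argument nor the paper's uses it at all.
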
 
The detailed proof is provided in Appendix \ref{appendix: Proof of main results}. Lemma \ref{lemma: stability of DPS} indicates that the sensitivity is bounded by a term quadratic in the measurement noise magnitude $\|\boldsymbol{y}_*-\boldsymbol{y}\|_2$. This result confirms the stability of the diffusion prior under the DPS likelihood approximation, validating its effectiveness for moderate noise levels that align with the assumed Gaussian likelihood model \citep{chung2023diffusion}. However, since the bound depends on the measurements, it may grow substantially under outlier contamination, as observed empirically in \Cref{fig:error_with_contamination}. Hence,  the result establishes stability but does not provide a robustness guaranty against outlier contamination, motivating our robust posterior guidance.

Before proceeding, we first provide the definition of robustness by quantifying how the posterior distribution changes in response to measurement perturbations. Following \cite{altamirano2023robust,duran2024outlier}, the posterior influence function (PIF) is defined as the KL divergence such that
\(
\text{PIF}_{\boldsymbol{y}_*}(\boldsymbol{y})\coloneq D_{\text{KL}}\left(p(\boldsymbol{x}|\boldsymbol{y}_*)\| p(\boldsymbol{x|\boldsymbol{y}})\right).
\)
The posterior is called \textit{outlier-robust} if the PIF is uniformly bounded, i.e., \(\sup_{\boldsymbol{y}\in\mathbb{R}^{d_y}}|\text{PIF}_{\boldsymbol{y}_*}(\boldsymbol{y})|<\infty\).

\subsection{Robust Diffusion Posterior}
\label{subsec:RDP}
To address this issue, we consider generalized Bayes (GB) \citep{sullivan2015introduction} in this work. GB considers alternative differentiable loss functions $\ell_{\boldsymbol{y}}:\mathcal{X}\rightarrow\mathbb{R}$ in place of the negative log-likelihood in Bayes' rule. The subscript $\boldsymbol{y}$ denotes the dependence on the observation $\boldsymbol{y}$. The loss $\ell_{\boldsymbol{y}}$ can be flexibly specified under mild regularity conditions; see Section 6.3 of \cite{sullivan2015introduction}. This yields a generalized posterior \citep{matsubara2022robust, wild2023rigorous, altamirano2023robust} in the following form: \(p^{(\ell)}(\boldsymbol{x}|\boldsymbol{y})\propto p(\boldsymbol{x})\exp{\{-\tau\,\ell_{\boldsymbol{y}}(\boldsymbol{x})\}}\)
where $\tau>0$ is a temperature parameter that controls the influence of the loss relative to the prior. Many choices of $\ell$ (e.g., density-power) are well established and show improved performance under likelihood misspecification \citep{huber2011robust,song2024robust}. 

Such settings assume multiple observations, with robustness defined in terms of the contamination of only part of the dataset. Differently, BIPs focus on the posterior conditioning on a \emph{single} (high-dimensional) measurement, where contamination is structured and often occurs at the level of coordinates or components, e.g., pixel-wise outliers in images; see Figure~\ref{fig: demo}(b)--(c), thereby making standard GB constructions nontrivial to apply in this setting. 
\begin{wrapfigure}{l}{0.42\textwidth}
    \centering
    \vspace{1em}
    \includegraphics[width=0.4\textwidth]{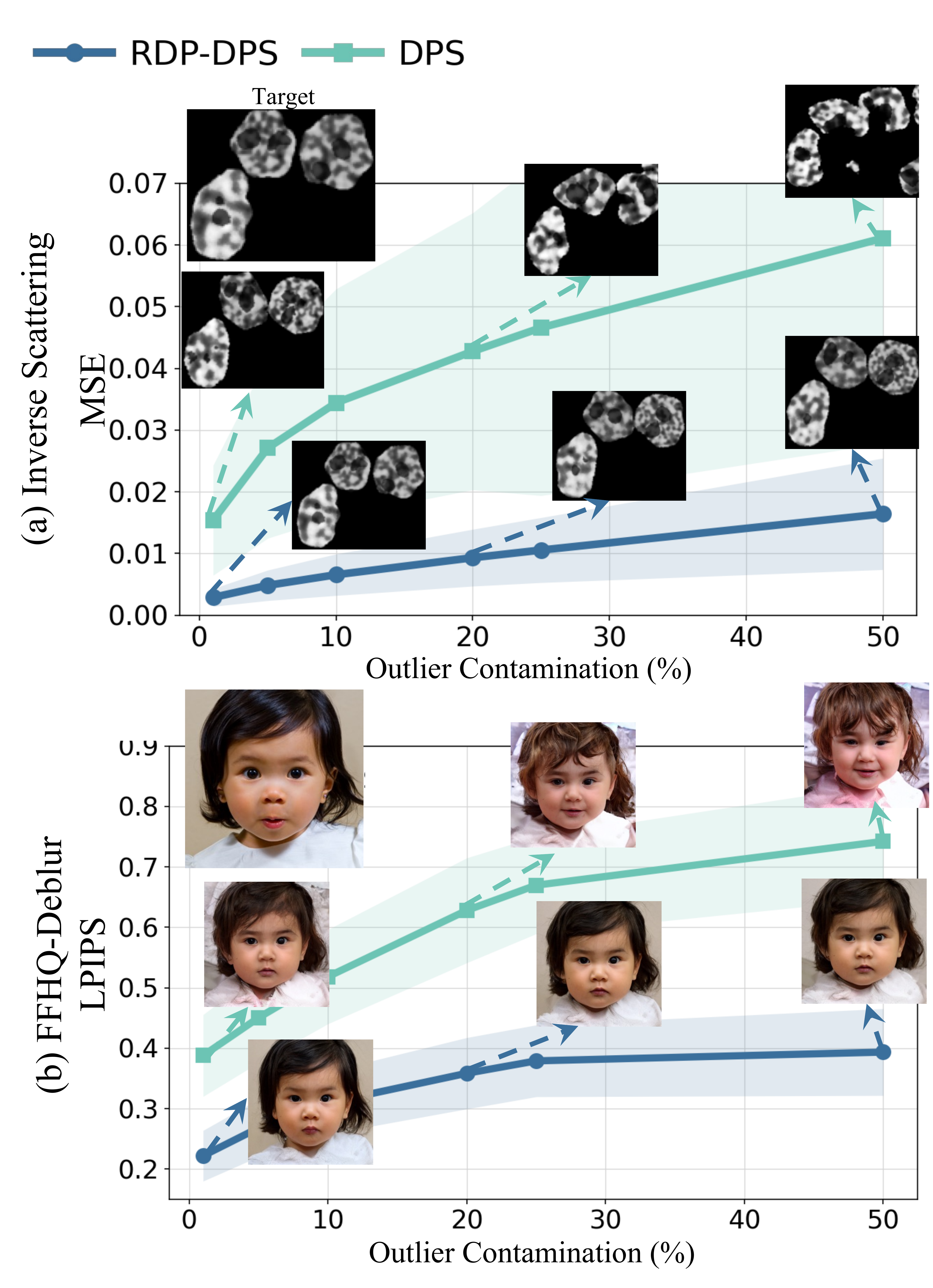}
    \caption{\small Recovery performances of DPS and RDP-DPS under increasing outlier contamination for inverse scattering and FFHQ-Deblur.}
    \label{fig:error_with_contamination}
    \vspace{-3em}
\end{wrapfigure}

To gain robustness in BIPs, we therefore choose $\ell_{\boldsymbol{y}}$ as a weighted likelihood loss and refer to the resulting method as the \textbackslash{}emph\{Robust Diffusion Posterior\} (RDP).  Following the additive likelihood form in \Cref{subsec: BIP}, RDP guides the reverse diffusion using a robust weighted likelihood evaluated at the DPS denoised estimate \(\hat{\boldsymbol{x}}_{0}(\boldsymbol{x}_t)\). RDP achieves robustness by transforming the DPS guidance term into a component-wise weighted objective. At diffusion time $t$, we define the loss function \(\ell_{\boldsymbol{y}}(\boldsymbol{x}_t)=-\;\sum_{i=1}^{d_y} w(r_{t,i})\,\log{\tilde{p}(y_i|\hat{\boldsymbol{x}}_{0|t})}\), where \(r_{t,i} = y_i - \hat{y}_{t,i}\) and \(\hat{y}_{t,i} \coloneq [F\hat{\boldsymbol{x}}_{0|t}]_i\) denote the fitting residuals and the predicted measurements, respectively. The loss function induces the robust diffusion posterior $p_t^{(\ell)}(\boldsymbol{x}_t|\boldsymbol{y})\propto p(\boldsymbol{x}_t)\exp{\{-\tau\,\ell_{\boldsymbol{y}}(\hat{\boldsymbol{x}}_{0|t})\}}$ at time $t$. The term $w\colon\mathbb{R}\rightarrow\mathbb{R}$ is a component-wise adaptive weight function. This weighting mechanism acts as a robust filter that given small residuals, $w_{i}\approx 1$, the loss function $\ell_{\boldsymbol{y}}$ recovers standard likelihood; conversely, large residuals are down-weighted to control outliers impact. 

We now provide a sufficient condition for the design of such a weighting function \(w\) to guarantee the diffusion posterior robustness, formalized in the following result:\\
\begin{theorem} 
\label{theorem: outlier-robust}
Consider the robust diffusion posterior with the score $\nabla_{\boldsymbol{x}_t}\log p_t^{(\ell)}(\boldsymbol{x}_t|\boldsymbol{y}) = \boldsymbol{s}_{\boldsymbol{\theta}}(\boldsymbol{x}_t, t) - \nabla_{\boldsymbol{x}_t}\ell_{\boldsymbol{y}}(\boldsymbol{x}_t)$. Let the guidance loss be defined as \(\ell_{\boldsymbol{y}}(\boldsymbol{x}_t)=-\sum_{i=1}^{d_y} w(r_i)\log p(y_i|\hat{\boldsymbol{x}}_{0|t})\), where \(r_i = y_i - [F(\hat{\boldsymbol{x}}_{0|t})]_i\) denotes the residual and \(w(\cdot)\) is a weighting function. Suppose assumptions~\ref{appendix assumption: Score Network Regularity}--\ref{appendix assumption: Score Network Accuracy} hold, if \(w\) satisfies the following conditions for any \(r\in\mathbb{R}\): 
\begin{equation} 
\label{eq: robust_weight_condition} |r\,w(r)| <\infty \quad \text{and} \quad |r^2w'(r)|<\infty, 
\end{equation}
where $w'$ is the gradient of $w$ w.r.t. $r$, the posterior KL divergence is uniformly bounded: 
\begin{talign*}
\sup_{\boldsymbol{y}\in\mathbb{R}^{d_y}} D_{\mathrm{KL}}\left( p_\delta^{(\ell)}\,\cdot\,|\boldsymbol{y}_*)\| p_\delta^{(\ell)}(\,\cdot\,|\boldsymbol{y})\right) \le C, 
\end{talign*} 
where \(C\) is a finite constant independent of the measurement \(\boldsymbol{y}\). Hence, the posterior \(p_\delta^{(\ell)}\) is outlier-robust, satisfying \(\sup_{\boldsymbol{y}\in\mathbb{R}^{d_y}}|\text{PIF}_{\boldsymbol{y}_*}(\boldsymbol{y})|<\infty\).
\end{theorem}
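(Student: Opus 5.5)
The plan is to bound the KL divergence between the two path measures produced by the guided reverse SDE (started from the same Gaussian and run from $T$ down to $\delta$) and then pass to the time-$\delta$ marginals. The data-processing inequality gives $D_{\mathrm{KL}}(p^{(\ell)}_\delta(\cdot|\boldsymbol{y}_*)\|p^{(\ell)}_\delta(\cdot|\boldsymbol{y})) \le D_{\mathrm{KL}}(\mathbb{P}^{\boldsymbol{y}_*}\|\mathbb{P}^{\boldsymbol{y}})$. Both processes have the same diffusion coefficient $\sqrt{\beta(t)}$ and differ only in the guidance part of the drift. Girsanov's theorem, which is justified by the regularity in assumptions~\ref{appendix assumption: Score Network Regularity}--\ref{appendix assumption: Score Network Accuracy}, then yields
\begin{equation*}
D_{\mathrm{KL}}(\mathbb{P}^{\boldsymbol{y}_*}\|\mathbb{P}^{\boldsymbol{y}}) = \frac{1}{2}\int_\delta^T \beta(t)\,\mathbb{E}_{\mathbb{P}^{\boldsymbol{y}_*}}\bigl\|\nabla_{\boldsymbol{x}_t}\ell_{\boldsymbol{y}}(\boldsymbol{x}_t)-\nabla_{\boldsymbol{x}_t}\ell_{\boldsymbol{y}_*}(\boldsymbol{x}_t)\bigr\|_2^2\,\mathrm{d}t .
\end{equation*}
I would follow the same route as the proof of Lemma~\ref{lemma: stability of DPS}. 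The difference is that the Gaussian guidance there is linear in $\boldsymbol{y}$, which produced the $\|\boldsymbol{y}-\boldsymbol{y}_*\|^2$ factor. Here I will instead show that each guidance term is bounded uniformly in $\boldsymbol{y}$.

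Next I compute the robust guidance explicitly. For the Gaussian likelihood, $-\log p(y_i|\hat{\boldsymbol{x}}_{0|t}) = r_i^2/(2\sigma_y^2)+\text{const}$, where $r_i = y_i - [F\hat{\boldsymbol{x}}_{0|t}]_i$ and $\nabla_{\boldsymbol{x}_t} r_i = -J_t^\top F^\top e_i$ with $J_t = \partial\hat{\boldsymbol{x}}_{0|t}/\partial\boldsymbol{x}_t$. The chain rule then gives
\begin{equation*}
\nabla_{\boldsymbol{x}_t}\ell_{\boldsymbol{y}}(\boldsymbol{x}_t) = -\frac{1}{\sigma_y^2}\sum_{i=1}^{d_y}\Bigl(r_i w(r_i) + \tfrac{1}{2} r_i^2 w'(r_i)\Bigr) J_t^\top F^\top e_i ,
\end{equation*}
with the additive constant absorbed either by normalisation or by treating its weighted gradient as $\mathrm{const}\cdot w'(r_i)$. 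Condition~\eqref{eq: robust_weight_condition} bounds the scalar factor $\psi(r):=r w(r)+\tfrac12 r^2 w'(r)$ by some $M<\infty$ uniformly over $r\in\mathbb{R}$, hence uniformly over $\boldsymbol{y}$. Since $\mathcal{X}$ is compact and the score network is Lipschitz and bounded (assumption~\ref{appendix assumption: Score Network Regularity}), the Jacobian satisfies $\|J_t\|\le L_t$, where $L_t$ is integrable against $\beta(t)$ on $[\delta,T]$. It follows that $\|\nabla\ell_{\boldsymbol{y}}\|_2 \le \sigma_y^{-2} M\sqrt{d_y}\,\|F\|\,L_t$ for every $\boldsymbol{y}$, and the same bound holds for $\boldsymbol{y}_*$.

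Finally I would bound the difference by the sum, $\|\nabla\ell_{\boldsymbol{y}}-\nabla\ell_{\boldsymbol{y}_*}\|^2\le 2\|\nabla\ell_{\boldsymbol{y}}\|^2+2\|\nabla\ell_{\boldsymbol{y}_*}\|^2$. Inserting this into the Girsanov integral gives
\begin{equation*}
C = 2\sigma_y^{-4}M^2 d_y\|F\|^2\int_\delta^T\beta(t)L_t^2\,\mathrm{d}t \;(\times\tau^2 \text{ if the temperature is kept}),
\end{equation*}
which does not depend on $\boldsymbol{y}$. Taking the supremum over $\boldsymbol{y}$ gives the claim, and the bound on the PIF follows because the PIF is exactly this KL divergence. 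The step that needs the most care is the Girsanov step. Novikov's condition must hold, and the time-$\delta$ marginal must be the same object as the $p^{(\ell)}_\delta$ in the statement. Uniform boundedness of the drift difference gives Novikov immediately. The real obstacle is controlling $\|J_t\|$ uniformly near $t=\delta$, since the factor $1/\sqrt{\alpha(t)}$ in Tweedie's formula and the growth of the score must stay finite. This is exactly why the terminal time $\delta>0$ is needed, and I would reuse the corresponding bound already derived for Lemma~\ref{lemma: stability of DPS}.
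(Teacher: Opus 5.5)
Your proposal is correct and follows essentially the same route as the paper: Girsanov with identical Gaussian initializations, the chain rule through the Tweedie Jacobian, a uniform bound on $\psi(r)=r\,w(r)+\tfrac12 r^2 w'(r)$ from the weight conditions, and a bound on the difference of the two guidance terms by twice their common uniform bound, which gives the same constant as the paper. One correction to your closing remark: $1/\sqrt{\alpha(t)}$ is largest at $t=T$ and close to $1$ near $t=\delta$, so the truncation matters only because Assumption~\ref{appendix assumption: Score Network Regularity} bounds $\nabla_{\boldsymbol{x}}\boldsymbol{s}_{\boldsymbol{\theta}}$ on $[\delta,T]$, which gives the uniform constant $e^{T\beta_{\max}/2}(1+L_s)$ of Lemma~\ref{lemma: Bounded Jacobian of the Posterior Mean} without needing compactness of $\mathcal{X}$ or boundedness of the score.
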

We provide the proof in Appendix~\ref{subsec appendix: Proof of Theorem outlier-robust}. This result stands in contrast to the standard DPS solver, whose PIF is unbounded. The proposed robust conditions control the influence of severe noise misspecification in $\boldsymbol{y}$, mitigating the risk of destabilizing the generation process.

\textbf{Choice of $w$.} The inverse multi-quadratic (IMQ) weighting function, defined as \(w(r) = (1 + r^2/c^2)^{-\nicefrac{1}{2}}\), satisfies the conditions established in Theorem \ref{theorem: outlier-robust}. Consequently, the resulting diffusion posterior is outlier-robust. In this work, we adopt $w$ in the IMQ form
\(w(r_i)=\left(1+r_i^2/c^2\right)^{-\nicefrac{1}{2}}\) for $i=1,...,d_y$, where $c>0$ is a soft threshold: a larger value of $c$ keeps weights closer to uniform, while a smaller value strongly down-weights large residuals. See Appendix \ref{Appendix subsec: Other Weight function choices Discussion} for additional discussion on alternative weight functions.

Algorithm~\ref{alg:rdps} presents RDP instantiated within the DPS sampler, which provides a realization aligned with the analysis in this paper. More broadly, RDP is a modular plug-in that can be incorporated into a broad class of plug-and-play diffusion-based approaches. We outline a general version in Appendix~\ref{Appendix subsec: General Pseudo Algorithm} Algorithm~\ref{alg: general robust diffusion posterior sampling} and provide associated experimental results in \Cref{sec: experiments}.

\paragraph{Performance given well-specification measurements.}
\begin{figure}[t]
\vspace{-1.5em}
    \centering
    \includegraphics[width=1\linewidth]{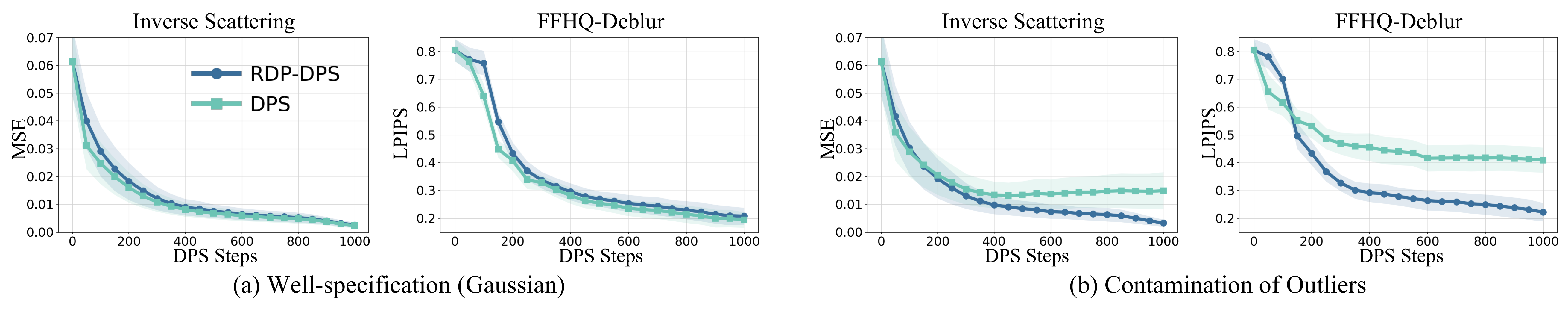}
    \caption{\small Recovery performance of DPS and RDP-DPS (IMQ) across denoising steps under Gaussian noise and outlier contamination, averaged over 100 cases per task.}
    \label{fig: Recovery performance of DPS and RDP-DPS}
    \vspace{-1.2em}
\end{figure}

\begin{wrapfigure}{r}{0.49\columnwidth}
\captionsetup{type=algorithm}
\hrule height 0.9pt
\vspace{0.2em}
\caption{Robust diffusion posterior sampling}
\label{alg:rdps}
\vspace{-0.5em}
\hrule height 0.5pt
\begin{algorithmic}[1]
\footnotesize
\STATE \textbf{Input:} observation \(\boldsymbol{y}\), score network \(\boldsymbol{s}_{\boldsymbol{\theta}}\), noise schedule \(\{\beta_t\}_{t=0}^T\), weight function \(w(\cdot)\), temperature \(\{\tau_t\}_{t=0}^T\)
\STATE Initialize \(\boldsymbol{x}_T \sim \mathcal{N}(\boldsymbol{0},\boldsymbol{I})\)

\FOR{$t=T,\ldots,1$}
    \STATE \(\alpha_t \gets 1-\beta_t,\quad \bar{\alpha}_t \gets \prod_{i=1}^t \alpha_i\)
    \STATE \(\hat{\boldsymbol{x}}_{0|t} \gets
    \frac{1}{\sqrt{\bar{\alpha}_t}}
    \bigl(\boldsymbol{x}_t+(1-\bar{\alpha}_t)\boldsymbol{s}_{\boldsymbol{\theta}}(\boldsymbol{x}_t,t)\bigr)\)
    \STATE \(\hat{\boldsymbol{y}} \gets F\,\hat{\boldsymbol{x}}_{0|t}\), \(\boldsymbol{r} \gets \boldsymbol{y}-\hat{\boldsymbol{y}}\)
    \STATE \(\boldsymbol{W} \gets (w(r_1),\dots,w(r_m))\)
    \STATE \(\ell_{\boldsymbol{y}}(\boldsymbol{x}_t) \gets
    \boldsymbol{W}^\top \log \tilde{p}\bigl(\boldsymbol{y}\mid \hat{\boldsymbol{x}}_0(\boldsymbol{x}_t)\bigr)\)
    \STATE \(\hat{\boldsymbol{s}} \gets
    \boldsymbol{s}_{\boldsymbol{\theta}}(\boldsymbol{x}_t,t)
    -\tau_t\nabla_{\boldsymbol{x}_t}\ell_{\boldsymbol{y}}(\boldsymbol{x}_t)\)
    \STATE \(\boldsymbol{z}\sim\mathcal{N}(\boldsymbol{0},\boldsymbol{I})\) if \(t>1\), else \(\boldsymbol{0}\)
    \STATE \(\boldsymbol{x}_{t-1}\gets
    \boldsymbol{x}_t+\beta_t\left(\frac{1}{2}\boldsymbol{x}_t+\hat{\boldsymbol{s}}\right)\)
\ENDFOR

\STATE \textbf{Return:} \(\boldsymbol{x}_0\)
\end{algorithmic}
\vspace{0.2em}
\hrule height 0.9pt
\vspace{-1.0em}
\end{wrapfigure}

So far, we have focused on the robustness of RDP under outlier-contaminated measurements. A natural follow-up question is whether this robustness comes at the cost of degraded performance given well-specified measurements. In this setting, the standard Gaussian likelihood provides the desired guidance. The robust weighted likelihood introduces a bias relative to this Gaussian guidance, and the magnitude of this bias depends on the threshold parameter $c$. For the IMQ weighting function, \(w(r)=(1+r^2/c^2)^{-1/2}\).  When the residual $r$ is small relative to $c$,  $w(r)\approx1$ that roughly recovers the Gaussian guidance. This is analogous to the discussion in standard Bayesian problems \citep{knoblauch2022optimization}. A larger $c$ favors high fidelity for clean data (low bias), while a smaller $c$ provides stronger suppression of large fitting residuals (high robustness).

\begin{wrapfigure}{r}{0.47\textwidth}
    \centering
    \vspace{-0.5em}
    \includegraphics[width=0.97\linewidth]{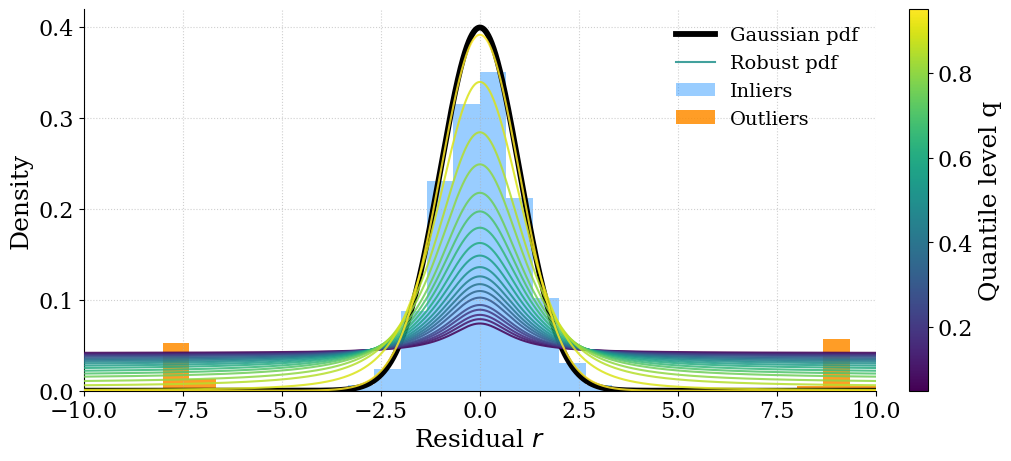}
    \caption{\small Histogram of residuals and outliers compared against Gaussian (black) and robust pdfs (colored) across quantiles $q$.}
    \label{fig: robust_pdf}
    \vspace{-0.35em}
\end{wrapfigure}

Moreover, the denoising process introduces a time-dependent residual scale, which typically decreases as the denoised estimate becomes more accurate. This makes a fixed threshold \(c\) less appropriate. In practice, we adopt an adaptive schedule for the parameter \(c\) to account for these dynamic changes over time. We dynamically calibrate $c_t$ at each timestep $t$ based on the statistics of the current fitting residuals. Specifically, we set \(c_t\) based on the \(q\)-th quantile of the absolute residuals:
\(
c_t = \text{Quantile}_q(|\boldsymbol{r}_t|),
\)
where \(q \in (0, 1]\) and \(\boldsymbol{r}_t=(r_{t,1},\ldots,r_{t,d_y})\).
By setting \(c\) at the $q$-th quantile $(\%)$, the posterior update is primarily driven by the $q\%$ of measurements with smaller residual magnitudes, while controlling the impact of the remaining $(1-q)\%$ as presented in Figure \ref{fig: robust_pdf}. With the adaptive design of \(c\), RDP-DPS closely matches DPS under well-specified measurements and provides more stable recovery under outlier contamination. We further include the empirical validation and  additional ablations in Section~\ref{subsec: Result Analysis}.

\begin{table*}[ht]
\centering
\caption{\small Quantitative results on \textit{Inverse Scattering} are reported as PSNR, SSIM, and NMAE (mean $\pm$ std) over 100 test samples. Results on \textit{FFHQ-Inpaint}, \textit{-Deblur}, and \textit{-Phase Retrieval}, reported in LPIPS~($\downarrow$), VIFL~($\downarrow$) as mean~$\pm$~std, and FID~($\downarrow$) on 500 test images.}
\label{tab: overall experiments}
\resizebox{\textwidth}{!}{%
\begin{tabular}{clccccccccc}
\toprule
\multirow{2}{*}{\textbf{Task}} & \multirow{2}{*}{\textbf{Method}}
  & \multicolumn{3}{c}{\textbf{Well-specification (Gaussian)}} 
  & \multicolumn{3}{c}{\textbf{Noise Misspecification (Student-T)}} 
  & \multicolumn{3}{c}{\textbf{Contamination of Outliers}} \\
\cmidrule(lr){3-5}\cmidrule(lr){6-8}\cmidrule(lr){9-11}
 & & PSNR($\uparrow$) & SSIM($\uparrow$) & NMAE($\downarrow$) & PSNR($\uparrow$) & SSIM($\uparrow$) & NMAE($\downarrow$) & PSNR($\uparrow$) & SSIM($\uparrow$) & NMAE($\downarrow$) \\
\midrule
\multirow{8}{*}{\rotatebox{90}{\shortstack{\textbf{Inverse}\\\textbf{Scattering}}}}
& DPS     & $26.26\pm2.83$ & $0.86\pm0.02$ & $0.16\pm0.02$ & $21.53\pm2.93$ & $0.78\pm0.06$ & $0.25\pm0.05$ & $19.37\pm3.44$ & $0.71\pm0.10$ & $0.33\pm0.05$ \\
& LGD     & $24.16\pm2.95$ & $0.76\pm0.20$ & $0.18\pm0.09$ & $18.80\pm2.54$ & $0.62\pm0.07$ & $0.34\pm0.10$ & $17.81\pm2.89$ & $0.53\pm0.12$ & $0.36\pm0.05$ \\
& $\Pi$GDM     & $24.23\pm2.26$ & $0.81\pm0.13$ & $0.18\pm0.07$ & $21.17\pm3.15$ & $0.69\pm0.05$ & $0.24\pm0.05$
        & $15.14\pm3.88$ & $0.64\pm0.15$ & $0.37\pm0.06$ \\
& PnPDM & $25.77\pm2.75$ & $0.87\pm0.04$ & $\mathbf{0.14\pm0.03}$ & $20.38\pm4.14$ & $0.73\pm0.12$ & $0.27\pm0.21$ & $15.33\pm3.23$ & $0.73\pm0.12$ & $0.30\pm0.03$ \\
& DiffPIR & $\mathbf{27.36\pm2.79}$ & $\mathbf{0.88\pm0.07}$ & $0.15\pm0.02$ & $22.91\pm3.33$ & $0.80\pm0.10$ & $0.23\pm0.03$ & $19.08\pm3.92$ & $0.71\pm0.13$ & $0.33\pm0.08$\\
\addlinespace[2pt]
\cdashline{2-11}
\addlinespace[2pt]
& RDP-DPS   &  $26.01\pm2.72$    &    $0.86\pm0.02$    &    $0.15\pm0.01$           &  $\mathbf{23.84\pm2.59}$      &   $\mathbf{0.82\pm0.03}$          &        $\mathbf{0.19\pm0.04}$       &        $\mathbf{25.24\pm2.74}$        &    $\mathbf{0.83\pm0.02}$       &    $\mathbf{0.15\pm0.02}$         \\
& RDP-LGD  & $24.88\pm2.17$   & $0.81\pm0.06$    &    $0.18\pm0.03$           &  $21.83\pm2.25$     &    $0.79\pm0.05$      &    $0.23\pm0.04$      &      $25.02\pm2.23$  &    $0.70\pm0.06$  & $0.21\pm0.03$ \\
& RDP-$\Pi$GDM   &  $23.67\pm2.15$    &   $0.79\pm0.07$   & $0.17\pm0.04$
         & $22.23\pm2.18$        &   $0.72\pm0.02$   & $0.19\pm0.03$
         &  $22.49\pm2.36$       &   $0.78\pm0.05$   &  $0.21\pm0.04$\\
& RDP-PnPDM & $23.94\pm2.65$ & $0.86\pm0.04$ & $0.16\pm0.02$ & $22.15\pm3.11$ & $0.79\pm0.06$ & $0.21\pm0.02$ & $23.45\pm2.95$ & $0.81\pm0.07$ & $0.16\pm0.03$ \\
\midrule
 & & LPIPS($\downarrow$) & VIFL($\downarrow$) & FID($\downarrow$)
   & LPIPS($\downarrow$) & VIFL($\downarrow$) & FID($\downarrow$)
   & LPIPS($\downarrow$) & VIFL($\downarrow$) & FID($\downarrow$) \\

\midrule

\multirow{8}{*}{\raisebox{-0.5\height}{\rotatebox[origin=c]{90}{\textbf{Inpaint}}}}
& DPS      & $\mathbf{0.19\pm0.04}$ & $0.62\pm0.04$ & $27.19$ & $0.25\pm0.06$ & $0.70\pm0.04$ & $30.21$ & $0.37\pm0.06$ & $0.89\pm0.04$ & $45.27$ \\
& LGD      & $0.21\pm0.03$ & $0.68\pm0.04$ & $28.04$ & $0.24\pm0.04$ & $0.75\pm0.05$ & $31.79$ & $0.34\pm0.06$ & $0.82\pm0.03$ & $37.93$ \\
& $\Pi$GDM & $0.21\pm0.05$ & $0.63\pm0.03$ & $28.72$ & $0.29\pm0.03$ & $0.77\pm0.04$ & $40.28$  & $0.38\pm0.06$ & $0.84\pm0.04$ & $47.93$ \\
& PnPDM    & $\mathbf{0.19\pm0.03}$ & $\mathbf{0.61\pm0.04}$ & $\mathbf{27.17}$ & $0.25\pm0.04$ & $0.73\pm0.04$ & $31.29$ & $0.36\pm0.04$ & $0.85\pm0.05$ & $47.28$ \\
& DiffPIR & $0.24\pm0.04$ & $0.70\pm0.06$ & $29.64$ & $0.26\pm0.07$ & $0.77\pm0.05$ & $35.83$ & $0.37\pm0.06$ & $0.87\pm0.04$ & $46.80$\\
\addlinespace[2pt]
\cdashline{2-11}
\addlinespace[2pt]
& RDP-DPS  & $0.20\pm0.04$ & $0.62\pm0.05$ & $27.62$ & $\mathbf{0.21\pm0.04}$ & $\mathbf{0.66\pm0.04}$ & $\mathbf{28.03}$ & $\mathbf{0.24\pm0.04}$ & $\mathbf{0.70\pm0.03}$ & $\mathbf{29.95}$ \\
& RDP-LGD  & $0.22\pm0.04$ & $0.69\pm0.04$ & $28.60$ & $0.23\pm0.05$ & $0.72\pm0.03$ & $29.52$ & $0.25\pm0.05$ & $0.77\pm0.04$ & $30.06$ \\
& RDP-$\Pi$GDM & $0.20\pm0.04$ & $0.63\pm0.03$ & $29.15$ & $0.24\pm0.04$ & $0.70\pm0.04$ & $28.19$ & $0.25\pm0.04$ & $0.74\pm0.05$ & $30.27$ \\
& RDP-PnPDM & $0.20\pm0.04$ & $0.65\pm0.03$ & $27.66$ & $0.23\pm0.04$ & $0.71\pm0.03$ & $29.35$ & $\mathbf{0.24\pm0.04}$ & $0.73\pm0.04$ & $30.20$ \\
\midrule
\multirow{8}{*}{\raisebox{-0.5\height}{\rotatebox[origin=c]{90}{\textbf{Deblur}}}}
& DPS      & $0.17\pm0.04$ & $0.61\pm0.05$ & $26.98$ & $0.20\pm0.04$ & $0.69\pm0.04$ & $29.35$ & $0.32\pm0.07$ & $0.83\pm0.07$ & $39.82$ \\
& LGD      & $\mathbf{0.13\pm0.03}$ & $\mathbf{0.58\pm0.04}$ & $\mathbf{22.36}$ & $0.17\pm0.04$ & $0.67\pm0.05$ & $22.84$ & $0.33\pm0.09$ & $0.79\pm0.08$ & $33.39$ \\
& $\Pi$GDM & $0.19\pm0.08$ & $0.62\pm0.06$ & $30.77$ & $0.29\pm0.11$ & $0.71\pm0.14$ & $55.76$ & $0.39\pm0.15$ & $0.90\pm0.18$ & $59.32$ \\
& PnPDM    & $0.14\pm0.04$ & $0.59\pm0.04$ & $24.11$ & $0.21\pm0.05$ & $0.70\pm0.04$ & $35.18$ & $0.35\pm0.04$ & $0.84\pm0.04$ & $42.54$ \\
& DiffPIR & $0.16\pm0.04$ & $0.61\pm0.05$ & $25.79$ & $0.22\pm0.04$ & $0.73\pm0.06$ & $30.86$ & $0.38\pm0.06$ & $0.88\pm0.07$ & $54.78$\\
\addlinespace[2pt]
\cdashline{2-11}
\addlinespace[2pt]
& RDP-DPS  & $0.17\pm0.04$ & $0.62\pm0.04$ & $26.39$ & $0.18\pm0.04$ & $0.65\pm0.03$ & $27.81$ & $0.25\pm0.06$ & $0.75\pm0.03$ & $30.60$ \\
& RDP-LGD  & $0.14\pm0.05$ & $0.61\pm0.05$ & $22.84$ & $\mathbf{0.14\pm0.03}$ & $\mathbf{0.63\pm0.04}$ & $\mathbf{22.71}$ & $\mathbf{0.22\pm0.05}$ & $\mathbf{0.70\pm0.04}$ & $\mathbf{28.55}$ \\
& RDP-$\Pi$GDM & $0.20\pm0.06$ & $0.63\pm0.05$ & $30.86$ & $0.21\pm0.04$ & $0.69\pm0.05$ & $33.94$ & $0.31\pm0.06$ & $0.83\pm0.07$ & $38.84$ \\
& RDP-PnPDM & $0.16\pm0.04$ & $0.61\pm0.03$ & $25.73$ & $0.18\pm0.04$ & $0.65\pm0.04$ & $27.69$ & $0.25\pm0.05$ & $0.77\pm0.04$ & $33.92$ \\
\midrule
\multirow{6}{*}{\rotatebox{90}{\shortstack{\textbf{Phase}\\\textbf{Retrieval}}}}
& DPS      & $0.45\pm0.17$ & $0.76\pm0.03$ & $62.16$ & $0.52\pm0.18$ & $0.85\pm0.03$ & $76.01$ & $0.64\pm0.22$ & $0.96\pm0.02$ & $118.59$ \\
& LGD      & $0.43\pm0.19$ & $0.79\pm0.04$ & $64.86$ & $0.55\pm0.21$ & $0.89\pm0.04$ & $95.42$ & $0.60\pm0.17$ & $0.95\pm0.05$ & $140.72$ \\
& PnPDM    & $\mathbf{0.38\pm0.13}$ & $\mathbf{0.72\pm0.04}$ & $\mathbf{57.04}$ & $0.49\pm0.17$ & $0.82\pm0.03$ & $74.38$ & $0.60\pm0.20$ & $0.93\pm0.04$ & $113.82$ \\
\addlinespace[2pt]
\cdashline{2-11}
\addlinespace[2pt]
& RDP-DPS  & $0.43\pm0.13$ & $0.76\pm0.02$ & $61.29$ & $0.49\pm0.12$ & $0.81\pm0.02$ & $73.51$ & $\mathbf{0.51\pm0.14}$ & $\mathbf{0.82\pm0.04}$ & $\mathbf{75.07}$ \\
& RDP-LGD  & $0.43\pm0.16$ & $0.79\pm0.04$ & $63.37$ & $0.51\pm0.14$ & $0.83\pm0.05$ & $76.28$ & $0.54\pm0.09$ & $0.88\pm0.03$ & $84.27$ \\
& RDP-PnPDM & $0.40\pm0.12$ & $0.74\pm0.02$ & $58.93$ & $\mathbf{0.47\pm0.13}$ & $\mathbf{0.77\pm0.04}$ & $\mathbf{63.93}$ & $0.52\pm0.16$ & $\mathbf{0.82\pm0.04}$ & $75.31$ \\
\bottomrule
\end{tabular}
}
\vspace{-0.5em}
\end{table*}

\section{Experiment}
\label{sec: experiments}
In this section, we validate the effectiveness and robustness of the proposed RDP method through extensive experiments and ablation studies. We demonstrate RDP's generality by integrating it with four representative diffusion-based inverse solvers, including DPS \citep{chung2023diffusion}, LGD \citep{song2023loss}, $\Pi$GDM \citep{song2023pseudoinverse}, and PnPDM \citep{wu2024principled}\footnote{PnPDM does not rely on the Tweedie approximation and therefore lies outside the formal analysis of Sec.~\ref{sec: method}. However, it also employs gradient-based guidance; adding RDP yields improved performance.}. We also include DiffPIR~\citep{zhu2023denoising} as another representative diffusion restoration approach. See \Cref{alg:rdps} and \Cref{alg: general robust diffusion posterior sampling} for the pseudo-code of our algorithms. For fair comparisons, all methods are evaluated using the same pretrained score/denoising models and sampling schedule. We refer to the RDP version of method X as RDP-X; for example, RDP-DPS denotes DPS with RDP guidance. Implementation code is available in the supplementary material. See additional experimental results and more ablation studies in \Cref{appendix: additional experimental results}.

\subsection{Experiment Settings}
\paragraph{Tasks.} We conduct experiments on a diverse range of scientific inverse problems and natural image restorations using the open-source benchmark \textit{InverseBench}\footnote{\url{https://github.com/devzhk/InverseBench}} \citep{zheng2025inversebench}. The tasks include: (1) inverse scattering (IS), (2) image inpainting, (3) deblurring, and (4) phase retrieval on the FFHQ-256 dataset (FFHQ-D/PR), covering both linear and nonlinear cases. In practice, these problems often suffer from likelihood misspecification; for example, phase retrieval can be affected by unexpected scattering from inhomogeneous media and other unmodeled physical corruptions. Such effects can introduce heavy-tailed noise or outlier contamination that violate Gaussian assumptions, making these tasks well suited for evaluating multi-perspective robust reconstructions.
\begin{figure*}[t]
    \centering
    \includegraphics[width=1\textwidth]{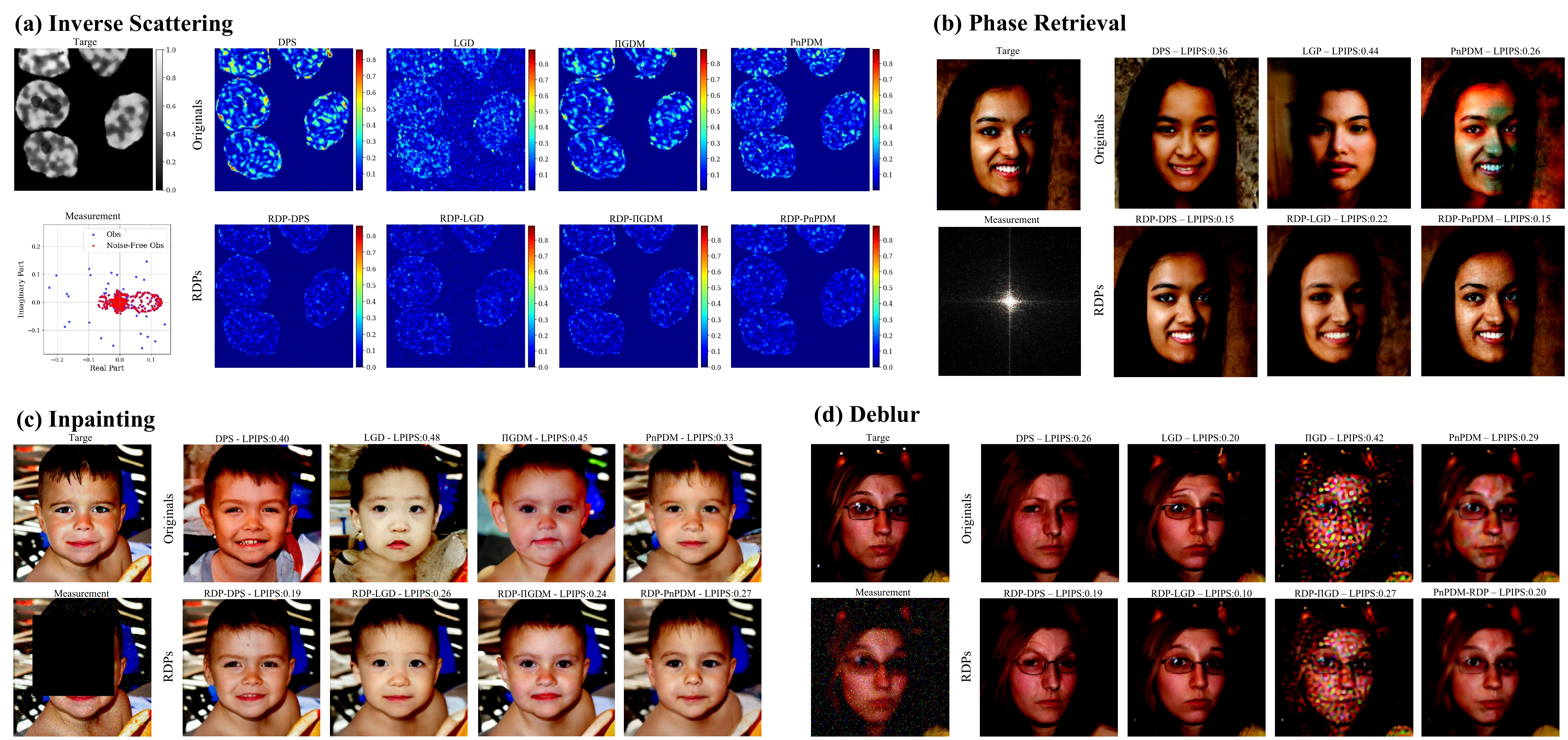} 
        \caption{\small Qualitative results on all outlier-corrupted tasks: (a) Inverse Scattering, (b) Phase Retrieval, (c) Deblurring, and (d) Inpainting. In each panel, the leftmost column shows the ground truth and the measurements. The remaining columns compare the originals (top row) with their robust versions (bottom row); we report reconstruction absolute errors in (a) and reconstructed samples in (b)–(d).}
    \label{fig: combined_qual_figure_outlier}
\end{figure*}
\begin{figure*}
    \centering
    \includegraphics[width=1\linewidth]{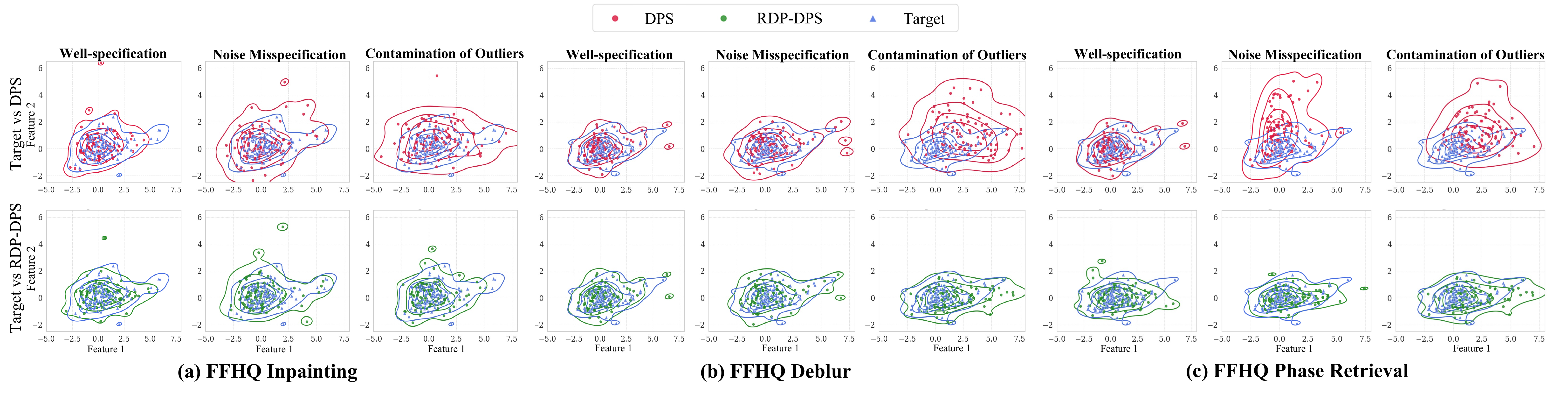}
    \caption{Posterior visualizations in Inception-v3 latent space for (a) Inpainting, (b) Deblur, and (c) Phase Retrieval under well-specified, misspecified, and outlier-contaminated settings. Top: DPS (\textcolor{red}{$\bullet$}); bottom: RDP-DPS (\textcolor{green}{$\bullet$}); target: \textcolor{blue}{$\blacktriangle$}. The density contours are estimated by kernel density estimation.}
    \label{fig: latentspaceall}
    \vspace{-0.35em}
\end{figure*}
\paragraph{Experimental Design.} A central focus of our evaluation is the performance regarding the mismatched noise distribution. We introduce two different schemes applied in the observation domain: (1) \emph{Heavy-tailed noise}: i.i.d. noises from the Student-t distribution for evaluating the performance under the misspecification of noise; and (2) \emph{Sparse impulsive outlier}: a fraction of components/pixels corrupted by large-magnitude perturbations for evaluating the performance under outlier contamination. Additionally, to evaluate our approaches under normal measurements, we also report results with the conventional i.i.d. Gaussian noise distribution. \Cref{subsec: Noise models} includes specific settings for each experiment.

\paragraph{Evaluation Metric.} For the IS problem, we quantify the reconstruction quality using peak signal-to-noise ratio (PSNR), structural similarity (SSIM), and the normalized mean absolute error (NMAE) metrics. In the two natural image tasks, we use standard perceptual quality metrics, including learned perceptual image patch similarity (LPIPS), visual information fidelity loss (VIFL), and Fr\'echet Inception Distance (FID) using the pre-trained InceptionV3 \citep{szegedy2016rethinking}. 

\subsection{Result Analysis} 
\label{subsec: Result Analysis}
In this section, we answer the following questions with key findings from our experiments.

(1) \emph{When measurements match the presumed likelihood model, a diffusion-model prior lead to posterior stability and convergence.} 

Quantitatively, Table~\ref{tab: overall experiments} shows that, when the Gaussian likelihood is well specified, RDP performs competitive with standard methods such as DPS, LGD, \(\Pi\)GDM, and PnPDM, indicating that the robust likelihood approximation preserves stable convergence. This is also supported by Figure~\ref{fig: latentspaceall} (left columns), where the sample and target density contours largely overlap in the first two latent dimensions. Figure~\ref{fig: Recovery performance of DPS and RDP-DPS} further shows that, with the adaptive choice of \(c_t\), RDP-DPS closely tracks DPS over diffusion time in the well-specified setting, while achieving more robust recovery under outlier contamination. Additional ablations in Section~\ref{subsec: Result Analysis} also show that the method is not highly sensitive to the choice of \(q\).

(2) \emph{Diffusion samplers can fail under misspecification,  and RDP effectively mitigate this failure.} 

\textsc{(Degradation of original approaches.)} In IS, the baselines in \Cref{tab: overall experiments} degrade by about $10\%-30\%$ due to the noise-distribution misspecification and $15\%-50\%$ under outlier contamination across metrics, compared with the well-specified setting. Original approaches typically rely on a presumed Gaussian likelihood and can degrade substantially under likelihood misspecification given non-Gaussian noise and outlier contamination.  Similar degradation phenomena are observed in image restoration tasks. Qualitatively in \Cref{fig: combined_qual_figure_outlier}, outliers induce visible artifacts, such as bright spots in the deblurring results of $\Pi$GDM and PnPDM , indicating limited robustness.

\textsc{(RDP Improvement.)} The integration of RDP substantially mitigates these issues and improves robustness to likelihood misspecification. In IS, RDP improves over baselines by roughly $5\%-29\%$ under noise-distribution misspecification and $20\%-50\%$ under outlier contamination across metrics. Similar gains are observed on FFHQ-D and FFHQ-PR in Table~\ref{tab: overall experiments}. Qualitatively, RDP removes bright-spot artifacts in $\Pi$GDM and PnPDM variants, as shown in Figures~\ref{fig: combined_qual_figure_outlier} and~\ref{fig: combined_qual_figure_student_t}, consistent with its adaptive down-weighting of extreme residuals. Evaluated in the Inception-v3 latent space (Figure~\ref{fig: latentspaceall}, middle/right), RDP-DPS remains closer to the target distribution than DPS under misspecification.

RDP consistently improves performance in both scenarios, but its gains are larger under outlier contamination than under heavy-tailed noise misspecification. We attribute this difference to the fact that large outliers are easier to identify and down-weight, whereas heavy-tailed noise is less separable from normal measurement noise.

(3) \emph{RDP are generalizable in (i) integration into  different diffusion samplers and (ii) in validation across different tasks.} 

This generality follows from our design: the robust gradient-based guidance term is decoupled from the underlying diffusion solver, making RDP a sampler-agnostic module that can be integrated into different gradient-based posterior samplers. Across image restoration and scientific tasks, Table~\ref{tab: overall experiments} shows that RDP maintains competitive performance under well-specified Gaussian noise while improving the original samplers under model misspecification. The qualitative results in Figures~\ref{fig: combined_qual_figure_outlier} and~\ref{fig: combined_qual_figure_student_t} further support these findings across tasks.

\subsection{Ablation Study}

We revisit the IS task to ablate RDP under different measurement conditions and design choices. Detailed settings and extensive additional results are provided in \Cref{appendix: additional experimental results}. We summarize the main findings below.

\textit{RDP is more beneficial when measurements are limited.} We first examine the performances of RDP under different measurement dimensions $d_y$. As shown in \Cref{tab:Performances under different noise settings and receiver configurations}, the advantage of RDP becomes more pronounced as the measurement dimension \(d_y\) decreases under both outlier and heavy-tailed noise. This suggests that robust guidance is particularly effective when measurements are limited, where each corrupted observation has a larger influence on the posterior update. 

\textit{RDP is robust to weighting designs.} We examine the sensitivity of RDP to the weighting function and the adaptive quantile \(q\). As shown in \Cref{tab:r2_weighting_ablation}, IMQ and Huber weighting achieve comparable performance under Gaussian and heavy-tailed noise, while IMQ gives slightly better results under outlier contamination. This suggests that the key factor is not the specific weighting form, but the adaptive down-weighting of large residuals. Since both IMQ and Huber use quantile-calibrated thresholds, they suppress a similar set of unreliable measurements. \Cref{tab:r3_quantile_ablation} shows that the performance is stable for \(q\geq 0.25\), while an overly small quantile such as \(q=0.10\) degrades performance by aggressively down-weighting useful measurements. These results indicate that RDP is not highly sensitive to the precise weighting choice or quantile.

\section{Conclusion}
We studied the \emph{stability} and \emph{robustness} of Bayesian inverse problems with diffusion model priors, providing both theoretical insights and empirical evidence. Building on this analysis, we introduced \emph{robust diffusion posterior sampling} (RDP), a unified plug-and-play approach that can be readily incorporated into existing gradient-based diffusion posterior samplers. RDP preserves the stability of standard diffusion posterior sampling while providing robustness under outlier contamination. Extensive experiments across scientific inverse problems and natural image restoration tasks validate our theory and demonstrate the effectiveness and robustness of the proposed method.

\paragraph{Limitations.}
This work has two main limitations. First, our theoretical analysis mainly focuses on linear BIPs, and extending it to nonlinear settings remains an important direction. Second, our experiments primarily consider isotropic noise, while structured noise, such as correlated or spatially varying noise, is also common in practice. 

\newpage

\bibliographystyle{plainnat}
\bibliography{references}

\appendix
\clearpage
\section{Notation}
\label{appsec: notation}

\begin{table}[h!]
\centering
\small
\caption{Summary of Key Notations.}
\vspace*{3mm}
\begin{tabular}{l | l}
\noalign{\hrule}
\noalign{\hrule}
\textbf{Notation} & \textbf{Meaning} \\
\noalign{\hrule}
\multicolumn{2}{c}{\textit{Inverse Problem}} \\
\noalign{\hrule}
$\boldsymbol{x}$ & target variable \\
$\boldsymbol{y}_*$ & noise-free measurement \\
$\boldsymbol{y}$ & observed measurement \\
$F(\cdot)$ & Measurement model \\
$\boldsymbol{\epsilon}$ & Measurement noise ($\boldsymbol{y} = F(\boldsymbol{x}) + \boldsymbol{\epsilon}$) \\

\noalign{\hrule}
\multicolumn{2}{c}{\textit{Diffusion Model}} \\
\noalign{\hrule}

$p_{\text{data}}$ & True data distribution \\
$\boldsymbol{x}_t$ & Noised target variable at time $t \in [0, T]$ \\
$\boldsymbol{x}_0$ & Clean data samples \\
$\boldsymbol{s}_{\boldsymbol{\theta}}(\boldsymbol{x}_t, t)$ & Learned score function estimator \\
$\boldsymbol{x}_{0}(\boldsymbol{x}_t)$ & Ideal denoised estimate using the true score $\nabla \log p_t$; $\boldsymbol{x}_{0|t} = \mathbb{E}[\boldsymbol{x}_0 | \boldsymbol{x}_t]$ \\
$\hat{\boldsymbol{x}}_{0}(\boldsymbol{x}_t)$ & Predicted denoised estimate using the score network $\boldsymbol{s}_{\boldsymbol{\theta}}$; approximation of $\boldsymbol{x}_{0|t}$ \\
$\beta_t, \bar{\alpha}_t$ & Noise schedule and scaling factor \\
$p(\boldsymbol{x}_t|\boldsymbol{y})$ & intermediate posterior distribution at time $t$\\

\noalign{\hrule}
\multicolumn{2}{c}{\textit{Generalized Bayesian Inference}} \\
\noalign{\hrule}

$\ell_{\boldsymbol{y}}(\boldsymbol{x}_t)$ & Robust likelihood function \\
$\tau_t$ & temperature (learning rate) at time $t$ \\
$p^{(\ell)}(\boldsymbol{x}_t|\boldsymbol{y})$ & Generalized posterior distribution at time $t$ \\
$\hat{\boldsymbol{y}}_t$ & Predicted observation; $\hat{\boldsymbol{y}}_t = F(\hat{\boldsymbol{x}}_{0|t})$ \\
$\boldsymbol{r}_t$ & Residual vector; $\boldsymbol{r}_t = \boldsymbol{y} - \hat{\boldsymbol{y}}_t$ \\
$\omega_t$ & Learning rate/temperature at time $t$ \\
$W(\boldsymbol{r}_t)$ & Robust weighting matrix \\
PIF & Posterior Influence Function \\

\noalign{\hrule}
\multicolumn{2}{c}{\textit{General Notation}} \\
\noalign{\hrule}

$\mathbb{E}[\cdot]$ & Expectation \\
$\mathbb{V}[\cdot]$ & Variance (scalar) or Covariance matrix (vector) \\
$\nabla_{\boldsymbol{z}}$ & Gradient with respect to $\boldsymbol{z}$ \\
$\boldsymbol{I}$ & Identity matrix \\
$\|\cdot\|_p$ & $L_p$ norm \\
$J_f(\cdot)$ & Jacobian of function $f$ \\
$H_f(\cdot)$ & Hessian of function \(f\) \\
\noalign{\hrule}
\noalign{\hrule}
\end{tabular}
\label{table: notations}
\end{table}

\clearpage

\section{Review of Tweedie approximation in posterior sampling.}
\label{Appedix: Review of Tweedie's approximation in posterior sampling.}
\vspace{-0.5em} 
\paragraph{Conditional Score Estimation: The \textit{Plug-and-Play} Approach.} The most direct application of Tweedie’s approximation in posterior sampling is found in methods that attempt to estimate the conditional score $\nabla_{\boldsymbol{x}_t} \log p(\boldsymbol{y}|\boldsymbol{x}_t)$ and plug it into the reverse SDE. This plug-and-play (PnP) family, commonly referred to as Diffusion Posterior Sampling (DPS) \citep{graikos2022diffusion,chung2023diffusion}, replaces an intractable conditional expectation with a Tweedie-based point estimator. We focus our analysis on this class. Despite its empirical effectiveness, the approximation is heuristic and can induce systematic bias \citep{chung2023diffusion}. To mitigate this approximation error, other methods utilize the structure of (linear) measurement model to derive more principled guidance signals. For instance, DDRM \citep{kawar2022denoising} approaches linear inverse problems by performing diffusion in the spectral domain of the linear measurement model via SVD, while the $\Pi$GDM \citep{song2023pseudoinverse} employs the Moore-Penrose pseudoinverse to project the guidance signal directly onto the solution manifold consistent with the measurements. Moving beyond first-order point estimates, recent works incorporate second-order corrections. \cite{rozet2024learning} proposes matching the variance of $p(\boldsymbol{x}_0 | \boldsymbol{x}_t)$ by approximating it as a Gaussian rather than a Dirac delta. Similarly, Tweedie Moment Projected Diffusions (TMPD) \citep{boys2024tweedie} use higher-order Tweedie expansions to estimate the posterior covariance. Though statistically optimal for near-Gaussian distributions, this approach improves stability at the cost of computing the explicit Jacobian of the Tweedie estimator.

\vspace{-0.75em} 
\paragraph{Sequential Monte Carlo: Unbiased Estimator.} While the conditional score approach introduces systematic bias into the generation process due to Tweedie's approximation, an alternative line of research utilizes Sequential Monte Carlo (SMC) to correct these errors \citep{wu2023practical,janati2025bridging}. SMC methods, such as the Twisted Diffusion Sampler (TDS) \citep{wu2023practical} and Monte Carlo Guided Diffusion (MCGDiff) \citep{cardoso2024monte}, provide theoretically well-founded algorithms for posterior sampling. Unlike the DPS, SMC offers strong guaranties: the weighted particle approximation is asymptotically exact as the number of particles goes to infinity \citep{doucet2001sequential,douc2008limit}.  Recent work on the Radon-Nikodym Estimator (RNE) \citep{he2025rne} has further unified these approaches. RNE demonstrates that the heuristic weight functions used in methods like TDS actually correspond to the theoretically optimal weights derived from the ratio of path measures between the forward and backward processes. This theoretical unification confirms that Tweedie’s approximation acts as the crucial link that makes these rigorous likelihood-ratio weights computable in practice.  For linear inverse problems, \cite{song2021solving} introduces an auxiliary noise measurements sequence \(\{\boldsymbol{y}_t\}_t\) that is coupled to the latent diffusion trajectory \(\{\boldsymbol{x}_t\}_t\) through the same noise schedule. This coupling renders the intermediate likelihood $p(\boldsymbol{y}_t|\boldsymbol{x}_t)$ linear Gaussian, yielding closed-form importance weights. This ensures that standard SMC updates remain tractable. \cite{dou2024diffusion} further formalizes this approach within the Filtering for Posterior Sampling (FPS) framework. The resulting FPS-SMC algorithm provides a global consistency guaranty: as the number of particles approaches infinity, the approximation converges to the true Bayesian posterior. While SMC provides asymptotic guaranties, it suffers from the curse of dimensionality and requires
a sufficient number of samples.

\vspace{-0.75em} 
\paragraph{Diffusion Priors with MCMC Posterior Sampling.}
Another related line of work formulates posterior sampling with diffusion or score-based priors as Markov chain Monte Carlo (MCMC) targeting an explicit posterior distribution or a controlled approximation. Within this class, the PnP Unadjusted Langevin Algorithm (PnP-ULA)\citep{laumont2022bayesian} and PnP Monte Carlo (PMC) \citep{sun2024provable} follow a closely related principle: both construct training-free MCMC samplers by combining an explicit measurement likelihood with an implicit denoising or score-based prior. PnP-ULA uses Tweedie's formula to interpret a pretrained score network as an approximate prior score and applies unadjusted Langevin dynamics for Bayesian imaging. PMC and its annealed variant extend this idea more directly to score-based generative priors, providing non-asymptotic stationarity guaranties. A recent extension of PMC, zeroth-order annealed PnP Monte Carlo (ZO-APMC) \citep{sahin2026provable}, develops a derivative-free annealed PMC framework that only requires forward-model evaluations and a pretrained score-based prior, making MCMC posterior sampling applicable when likelihood gradients or pseudo-inverses are unavailable. Subsequent work studies the sensitivity of this sampler to mismatched measurement and prior models, providing a posterior discrepancy metric to quantify the deviation of the resulting sampling distribution from the target posterior \citep{renaud2024plugandplay}. Recent proximal MCMC theory further extends these guaranties to broader settings \citep{renaud2026from}. In a different direction, the PnP Split Gibbs Sampler (PnP-SGS) alternates between a data-consistency update and a denoising-prior update, providing a practical Gibbs-style framework for Bayesian imaging with implicit deep generative priors \citep{coeurdoux2024plug}.

\clearpage

\section{Preliminaries of Tweedie's Formula in Diffusion Models}
\label{appsec: Preliminaries}

\begin{proposition}[First-order Tweedie Approximation \citep{chung2023diffusion}]
\label{proposition: First-order Tweedie Approximation}
    For the case of VP-SDE sampling, $p(\boldsymbol{x}_0|\boldsymbol{x}_t)$ has the analytical posterior expectation, denoted as $\boldsymbol{x}_{0|t}(\boldsymbol{x}_t) \coloneqq \mathbb{E}[\boldsymbol{x}_0|\boldsymbol{x}_t]$, is given by \citep{chung2023diffusion}: 
    \begin{equation} \label{eq: first order tweedie} \boldsymbol{x}_{0}(\boldsymbol{x}_t) = \frac{1}{\sqrt{\alpha(t)}}\left(\boldsymbol{x}_t + (1-\alpha(t))\nabla{\boldsymbol{x}_t}\log p_t(\boldsymbol{x}_t)\right), 
    \end{equation}
    where $\alpha(t) = \exp(-\int_0^t \beta(s)\mathrm{d}s)$ is the accumulated noise variance. 
\end{proposition}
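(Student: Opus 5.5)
The plan is to derive the identity directly from the Gaussian transition kernel of the VP-SDE, by differentiating the marginal density under the integral sign.

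\textbf{Step 1: the transition kernel.} Solving the linear forward SDE $\mathrm{d}\boldsymbol{x}_t=-\frac12\beta(t)\boldsymbol{x}_t\,\mathrm{d}t+\sqrt{\beta(t)}\,\mathrm{d}\boldsymbol{w}$ with integrating factor $\exp(\frac12\int_0^t\beta)$ gives
\[
\boldsymbol{x}_t=\sqrt{\alpha(t)}\,\boldsymbol{x}_0+\int_0^t e^{-\frac12\int_s^t\beta}\sqrt{\beta(s)}\,\mathrm{d}\boldsymbol{w}_s .
\]
By It\^o's isometry, the noise term has covariance $\int_0^t\beta(s)e^{-\int_s^t\beta}\,\mathrm{d}s\,\boldsymbol{I}$. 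This integral is exactly $1-\alpha(t)$, since the integrand is the $s$-derivative of $e^{-\int_s^t\beta}$. Hence
\[
p(\boldsymbol{x}_t\mid\boldsymbol{x}_0)=\mathcal{N}\bigl(\sqrt{\alpha(t)}\,\boldsymbol{x}_0,\,(1-\alpha(t))\boldsymbol{I}\bigr).
\]

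\textbf{Step 2: compute the score of the marginal.} Write $p_t(\boldsymbol{x}_t)=\int p(\boldsymbol{x}_t\mid\boldsymbol{x}_0)\,p(\boldsymbol{x}_0)\,\mathrm{d}\boldsymbol{x}_0$. Differentiating under the integral and using
\[
\nabla_{\boldsymbol{x}_t}p(\boldsymbol{x}_t\mid\boldsymbol{x}_0)=-\frac{\boldsymbol{x}_t-\sqrt{\alpha(t)}\,\boldsymbol{x}_0}{1-\alpha(t)}\,p(\boldsymbol{x}_t\mid\boldsymbol{x}_0),
\]
then dividing by $p_t(\boldsymbol{x}_t)$ and applying Bayes' rule, we get
\[
\nabla\log p_t(\boldsymbol{x}_t)=-\frac{\boldsymbol{x}_t-\sqrt{\alpha(t)}\,\mathbb{E}[\boldsymbol{x}_0\mid\boldsymbol{x}_t]}{1-\alpha(t)}.
\]

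\textbf{Step 3: rearrange.} Solving this linear relation for $\mathbb{E}[\boldsymbol{x}_0\mid\boldsymbol{x}_t]$ gives exactly \eqref{eq: first order tweedie}.

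\textbf{Main obstacle.} The only delicate point is justifying the exchange of gradient and integral in Step 2. This requires $t>0$, so that $1-\alpha(t)>0$ and the kernel is non-degenerate. On compact $\mathcal{X}$, the prior has bounded support. The kernel gradient is then bounded by an integrable envelope, locally uniformly in $\boldsymbol{x}_t$, and dominated convergence applies. In addition, $p_t>0$ everywhere, because it is a Gaussian mixture, so dividing by $p_t$ is legitimate.
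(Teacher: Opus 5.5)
Your proposal is correct. The paper gives no proof of its own here. It states the identity as a preliminary attributed to \citep{chung2023diffusion} and takes the Gaussian transition kernel as given in its setup. The standard route behind that citation is Tweedie's formula for exponential families, specialised to the kernel $\mathcal{N}(\sqrt{\alpha(t)}\boldsymbol{x}_0,(1-\alpha(t))\boldsymbol{I})$. That is the same differentiation-under-the-integral computation as your Step~2, only phrased through natural parameters.

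Your version is more self-contained. Step~1 derives the kernel the paper assumes, and Step~2 is exactly the unconditional case of the paper's Proposition~\ref{prop: exact form of conditional score}.

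Two minor refinements to your regularity remark:
\begin{itemize}
\item The non-degeneracy you actually need is $\int_0^t\beta(s)\,\mathrm{d}s>0$, not just $t>0$, because the paper's setup allows $\beta_{\min}=0$.
\item Compactness of $\mathcal{X}$ is not needed. For $\alpha(t)<1$, the quantity $\|\boldsymbol{x}_t-\sqrt{\alpha(t)}\boldsymbol{x}_0\|\,p(\boldsymbol{x}_t\mid\boldsymbol{x}_0)$ is bounded uniformly in $(\boldsymbol{x}_t,\boldsymbol{x}_0)$. This justifies both the exchange of gradient and integral and the finiteness of $\mathbb{E}[\boldsymbol{x}_0\mid\boldsymbol{x}_t]$ for an arbitrary prior.
\end{itemize}
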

By replacing the $\nabla_{\boldsymbol{x}_t}\log{p(\boldsymbol{x}_t)}$ with the score network $\boldsymbol{s}_{\boldsymbol{\theta}}(\boldsymbol{x}_t,t)$, the posterior expectation can be approximated as $\hat{\boldsymbol{x}}_0\approx\frac{1}{\sqrt{\alpha(t)}}\big(\boldsymbol{x}_t+(1-\alpha(t))\boldsymbol{s}_{\boldsymbol{\theta}}(\boldsymbol{x}_t,t)\big)$.

\begin{proposition}[Second-order Tweedie Approximation \citep{meng2021estimating,boys2024tweedie}] 
\label{proposition: Second-order Tweedie Approximation} For the case of VP-SDE, the conditional posterior $p(\boldsymbol{x}_0 \mid \boldsymbol{x}_t)$ has an analytical posterior covariance given by:
\begin{equation*}
    \mathbb{V}(\boldsymbol{x}_0|\boldsymbol{x}_t)=(1-\alpha(t))\big(\frac{1}{\alpha(t)}\boldsymbol{I} +\nabla_{\boldsymbol{x}_t}^2\log{p(\boldsymbol{x}_t)}\big).
\end{equation*}
\end{proposition}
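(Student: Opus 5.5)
The plan is to derive both Tweedie identities from a single object: the log-partition structure of the Gaussian forward kernel. Under the VP-SDE, the transition is $\boldsymbol{x}_t=\sqrt{\alpha(t)}\,\boldsymbol{x}_0+\sqrt{1-\alpha(t)}\,\boldsymbol{z}$ with $\boldsymbol{z}\sim\mathcal{N}(\boldsymbol{0},\boldsymbol{I})$. The marginal is therefore
\[
p_t(\boldsymbol{x}_t)=\int p(\boldsymbol{x}_0)\,\mathcal{N}\bigl(\boldsymbol{x}_t;\sqrt{\alpha(t)}\boldsymbol{x}_0,(1-\alpha(t))\boldsymbol{I}\bigr)\,\mathrm{d}\boldsymbol{x}_0 .
\]
Write $\alpha=\alpha(t)$ and expand the Gaussian exponent. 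This gives $p_t(\boldsymbol{x}_t)=\exp\{-\|\boldsymbol{x}_t\|^2/(2(1-\alpha))\}\,Z(\boldsymbol{\eta})$ up to a constant, where $\boldsymbol{\eta}=\sqrt{\alpha}\,\boldsymbol{x}_t/(1-\alpha)$ and
\[
Z(\boldsymbol{\eta})=\int p(\boldsymbol{x}_0)\exp\{\boldsymbol{\eta}^\top\boldsymbol{x}_0-\alpha\|\boldsymbol{x}_0\|^2/(2(1-\alpha))\}\,\mathrm{d}\boldsymbol{x}_0 .
\]
So $p(\boldsymbol{x}_0\mid\boldsymbol{x}_t)$ is an exponential family in $\boldsymbol{x}_0$ with natural parameter $\boldsymbol{\eta}$ and sufficient statistic $\boldsymbol{x}_0$. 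Since $\mathcal{X}$ is compact, every moment is finite, and differentiation under the integral sign is justified by dominated convergence. This is the only regularity point to check.

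The key steps, in order, are as follows.

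\begin{itemize}
\item \emph{Standard cumulant identities.} By the exponential-family structure, $\nabla_{\boldsymbol{\eta}}\log Z=\mathbb{E}[\boldsymbol{x}_0\mid\boldsymbol{x}_t]$ and $\nabla^2_{\boldsymbol{\eta}}\log Z=\mathbb{V}(\boldsymbol{x}_0\mid\boldsymbol{x}_t)$.
\item \emph{Chain rule.} Since $\partial\boldsymbol{\eta}/\partial\boldsymbol{x}_t=\sqrt{\alpha}/(1-\alpha)\,\boldsymbol{I}$, we get
\[
\nabla_{\boldsymbol{x}_t}\log p_t=-\frac{\boldsymbol{x}_t}{1-\alpha}+\frac{\sqrt{\alpha}}{1-\alpha}\,\mathbb{E}[\boldsymbol{x}_0\mid\boldsymbol{x}_t].
\]
Rearranging recovers Proposition~\ref{proposition: First-order Tweedie Approximation}, which also checks the setup.
\item \emph{Second derivative.} Differentiating once more gives
\[
\nabla^2_{\boldsymbol{x}_t}\log p_t=-\frac{1}{1-\alpha}\boldsymbol{I}+\frac{\alpha}{(1-\alpha)^2}\,\mathbb{V}(\boldsymbol{x}_0\mid\boldsymbol{x}_t).
\]
\item \emph{Solve for the covariance.} This yields
\[
\mathbb{V}(\boldsymbol{x}_0\mid\boldsymbol{x}_t)=\frac{1-\alpha}{\alpha}\Bigl(\boldsymbol{I}+(1-\alpha)\nabla^2_{\boldsymbol{x}_t}\log p_t(\boldsymbol{x}_t)\Bigr).
\]
\end{itemize}

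The main obstacle is not analytic but one of bookkeeping: matching this derived expression to the form printed in Proposition~\ref{proposition: Second-order Tweedie Approximation}. Expanding the derived formula gives $(1-\alpha)\bigl(\tfrac{1}{\alpha}\boldsymbol{I}+\tfrac{1-\alpha}{\alpha}\nabla^2\log p_t\bigr)$. The identity term agrees with the statement exactly. The Hessian term carries an extra factor $(1-\alpha)/\alpha$.

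I would first try to find a normalization convention in \citep{meng2021estimating,boys2024tweedie} under which the two forms coincide, for example if the Hessian is taken with respect to a rescaled variable or with $p$ denoting a rescaled density. If no such convention applies, I would state the derived expression as the correct covariance and flag the printed version as a typo. As a sanity check, I would verify the result on a Gaussian prior $\boldsymbol{x}_0\sim\mathcal{N}(\boldsymbol{0},s^2\boldsymbol{I})$. There the exact posterior covariance is $s^2(1-\alpha)/(\alpha s^2+1-\alpha)\,\boldsymbol{I}$, and the Hessian is $-(\alpha s^2+1-\alpha)^{-1}\boldsymbol{I}$. Substituting both into the derived formula should reproduce the exact covariance, and the same test decides between the candidate normalizations.
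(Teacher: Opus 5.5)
Your derivation is correct. It is the computation the paper only alludes to with ``applying Tweedie's formula to second-order moments''; the paper gives no argument beyond that citation. The exponential-family identities, the chain rule through $\boldsymbol{\eta}=\sqrt{\alpha}\,\boldsymbol{x}_t/(1-\alpha)$, and the conclusion $\mathbb{V}(\boldsymbol{x}_0|\boldsymbol{x}_t)=\frac{1-\alpha}{\alpha}\bigl(\boldsymbol{I}+(1-\alpha)\nabla^2_{\boldsymbol{x}_t}\log p_t(\boldsymbol{x}_t)\bigr)$ all check out. Regularity is even easier than you say: the factor $e^{-\alpha\|\boldsymbol{x}_0\|^2/(2(1-\alpha))}$ makes $Z$ finite and smooth for any prior once $\alpha\in(0,1)$.

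You should not hedge at the ``normalization convention'' step. The mismatch is an error in the printed statement, not a convention. The statement explicitly uses $\nabla^2_{\boldsymbol{x}_t}\log p(\boldsymbol{x}_t)$, the same marginal as in the first-order proposition. Substituting your (correct) Hessian identity into the printed right-hand side $P$ gives
\[
P-\mathbb{V}(\boldsymbol{x}_0|\boldsymbol{x}_t)=(1-2\alpha)\Bigl(\tfrac{1}{\alpha}\boldsymbol{I}-\tfrac{1}{1-\alpha}\mathbb{V}(\boldsymbol{x}_0|\boldsymbol{x}_t)\Bigr).
\]
So the printed formula holds only when $\alpha=1/2$, or when the posterior covariance equals exactly $\frac{1-\alpha}{\alpha}\boldsymbol{I}$. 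The latter is impossible on a compact $\mathcal{X}$ whenever $\frac{1-\alpha}{\alpha}$ exceeds its squared diameter.

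Your Gaussian check confirms this concretely. The printed version gives $\frac{(1-\alpha)(\alpha s^2+1-2\alpha)}{\alpha(\alpha s^2+1-\alpha)}$, which is even negative when $\alpha s^2<2\alpha-1$.

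The paper's subsequent Remark, $\mathbb{V}=\sqrt{\alpha}J_{\boldsymbol{x}_0}+\frac{1-2\alpha}{\alpha}\boldsymbol{I}$, is just the algebraic consequence of the misprint. The correct relation is $\mathbb{V}(\boldsymbol{x}_0|\boldsymbol{x}_t)=\frac{1-\alpha}{\sqrt{\alpha}}J_{\boldsymbol{x}_0}(\boldsymbol{x}_t)$.

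So state your formula as the result and record the printed one as a typo. No rescaling of the variable or density used in the paper's notation reconciles the two.
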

This follows from applying Tweedie’s formula to second-order moments. It introduces a curvature term to the first-order estimate, the Hessian of $\log p(\boldsymbol{x}_t)$—to capture posterior uncertainty. While the first-order formula gives the posterior mean $\mathbb{E}(\boldsymbol{x}_0|\boldsymbol{x}_t)$, the second-order formula gives the corresponding covariance $\mathbb{V}(\boldsymbol{x}_0|\boldsymbol{x}_t)$.  Direct evaluation of $\nabla_{\boldsymbol{x}_t}^2 \log p(\boldsymbol{x}_t)$ is often impractical because it requires second-order score information. In practice, this Hessian is approximated using the score network, either through automatic differentiation \citep{meng2021estimating,zhang2023moment,boys2024tweedie} or by training an auxiliary network to estimate the Hessian term \citep{bao2022estimating,bao2022analytic,ou2025improving}.

\begin{remark}
There is a straightforward algebraic relation between Proposition \ref{proposition: First-order Tweedie Approximation} and \ref{proposition: Second-order Tweedie Approximation} such that
    \begin{equation}
        \label{eq: relation between First-order Tweedie and Second-order Tweedie}
        \mathbb{V}(\boldsymbol{x}_0|\boldsymbol{x}_t)=\sqrt{\alpha(t)}J_{\boldsymbol{x}_0}(\boldsymbol{x}_t)+\left(\frac{1-2\alpha(t)}{\alpha(t)}\right)\boldsymbol{I},
    \end{equation}
    where \(J_{\boldsymbol{x}_0}\) denotes the Jacobian of the posterior mean map. Note that this identity is not valid for an arbitrary Jacobian: the right-hand side need not be symmetric positive semidefinite and thus may fail to define a covariance matrix. It holds when \(J_T\) is the Jacobian of the \emph{true} posterior-mean map induced by the associated density \(p(\boldsymbol{x}_t)\).
\end{remark}

\begin{proposition}[Conditional Tweedie's Formula]
    \label{prop: exact form of conditional score}
    Assuming a transition kernel $p(\boldsymbol{x}_t|\boldsymbol{x}_0) = \mathcal{N}(\boldsymbol{x}_t; \sqrt{\alpha(t)}\boldsymbol{x}_0, (1-\alpha(t))\boldsymbol{I})$ from VP-SDE, the score of the posterior marginal distribution at time $t$ is exactly related to the conditional posterior expectation of $\boldsymbol{x}_0$ by:
    \begin{equation}
    \nabla_{\boldsymbol{x}_t} \log p(\boldsymbol{x}_t|\boldsymbol{y}) = -\frac{\boldsymbol{x}_t - \sqrt{\alpha(t)}\mathbb{E}[\boldsymbol{x}_0|\boldsymbol{x}_t, \boldsymbol{y}]}{1 - \alpha(t)}.
    \end{equation}
\end{proposition}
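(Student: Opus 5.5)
The plan is to write the conditional marginal at time $t$ as a Gaussian mixture over the clean variable, differentiate under the integral, and recognise the result as a posterior expectation via Bayes' rule, exactly mirroring the unconditional argument behind Proposition~\ref{proposition: First-order Tweedie Approximation}.

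\textbf{Step 1: mixture representation.} The forward VP-SDE noising acts on $\boldsymbol{x}_0$ alone and never sees the measurement. Hence $\boldsymbol{x}_t$ is conditionally independent of $\boldsymbol{y}$ given $\boldsymbol{x}_0$, i.e. $p(\boldsymbol{x}_t\mid\boldsymbol{x}_0,\boldsymbol{y})=p(\boldsymbol{x}_t\mid\boldsymbol{x}_0)$. Consequently
\begin{equation*}
p(\boldsymbol{x}_t\mid\boldsymbol{y})=\int_{\mathcal{X}} p(\boldsymbol{x}_t\mid\boldsymbol{x}_0)\,p(\boldsymbol{x}_0\mid\boldsymbol{y})\,\mathrm{d}\boldsymbol{x}_0 .
\end{equation*}
Because the Gaussian kernel is strictly positive, this density is strictly positive for every $\boldsymbol{x}_t$, so its logarithm and the division in Step 3 are well defined.

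\textbf{Step 2: differentiate the kernel.} For the kernel $\mathcal{N}(\sqrt{\alpha(t)}\boldsymbol{x}_0,(1-\alpha(t))\boldsymbol{I})$ we have
\begin{equation*}
\nabla_{\boldsymbol{x}_t}p(\boldsymbol{x}_t\mid\boldsymbol{x}_0)=-\frac{\boldsymbol{x}_t-\sqrt{\alpha(t)}\boldsymbol{x}_0}{1-\alpha(t)}\,p(\boldsymbol{x}_t\mid\boldsymbol{x}_0).
\end{equation*}

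\textbf{Step 3: exchange gradient and integral.} Passing $\nabla_{\boldsymbol{x}_t}$ inside the integral and dividing by $p(\boldsymbol{x}_t\mid\boldsymbol{y})$ gives
\begin{equation*}
\nabla_{\boldsymbol{x}_t}\log p(\boldsymbol{x}_t\mid\boldsymbol{y})=-\int \frac{\boldsymbol{x}_t-\sqrt{\alpha(t)}\boldsymbol{x}_0}{1-\alpha(t)}\,\frac{p(\boldsymbol{x}_t\mid\boldsymbol{x}_0)\,p(\boldsymbol{x}_0\mid\boldsymbol{y})}{p(\boldsymbol{x}_t\mid\boldsymbol{y})}\,\mathrm{d}\boldsymbol{x}_0 .
\end{equation*}
By Bayes' rule, together with the conditional independence from Step 1, the ratio in the integrand equals $p(\boldsymbol{x}_0\mid\boldsymbol{x}_t,\boldsymbol{y})$. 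This is a probability density in $\boldsymbol{x}_0$, so the $\boldsymbol{x}_t$ term integrates to $\boldsymbol{x}_t$ itself. The $\boldsymbol{x}_0$ term becomes $\sqrt{\alpha(t)}\,\mathbb{E}[\boldsymbol{x}_0\mid\boldsymbol{x}_t,\boldsymbol{y}]$, which yields the stated identity.

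\textbf{Main obstacle.} The only non-algebraic point is justifying the exchange of gradient and integral in Step 3. I would use dominated convergence, relying on compactness of $\mathcal{X}$ from Section~\ref{subsec: BIP}. Fix a compact neighbourhood $K$ of $\boldsymbol{x}_t$. For $\boldsymbol{x}_t'\in K$ and $\boldsymbol{x}_0\in\mathcal{X}$, the quantity $\|\boldsymbol{x}_t'-\sqrt{\alpha(t)}\boldsymbol{x}_0\|$ is uniformly bounded, and the kernel is bounded by its normalising constant. Hence the gradient of the integrand is dominated by a constant multiple of $p(\boldsymbol{x}_0\mid\boldsymbol{y})$, which is integrable. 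The same bound guarantees that the conditional mean $\mathbb{E}[\boldsymbol{x}_0\mid\boldsymbol{x}_t,\boldsymbol{y}]$ exists and is finite. Without compactness, one would instead need a first-moment assumption on $p(\boldsymbol{x}_0\mid\boldsymbol{y})$.
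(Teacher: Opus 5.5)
The paper gives no proof of this proposition. It appears among the preliminaries in Appendix~\ref{appsec: Preliminaries} as a standard fact, the conditional analogue of Proposition~\ref{proposition: First-order Tweedie Approximation}, so there is no argument of the paper's to compare against.

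Your derivation is the standard one, and it is correct. Conditional independence of $\boldsymbol{x}_t$ and $\boldsymbol{y}$ given $\boldsymbol{x}_0$ gives the mixture $p(\boldsymbol{x}_t\mid\boldsymbol{y})=\int p(\boldsymbol{x}_t\mid\boldsymbol{x}_0)\,p(\boldsymbol{x}_0\mid\boldsymbol{y})\,\mathrm{d}\boldsymbol{x}_0$. The Gaussian kernel contributes the factor $-(\boldsymbol{x}_t-\sqrt{\alpha(t)}\boldsymbol{x}_0)/(1-\alpha(t))$. Bayes' rule then identifies the normalised integrand as $p(\boldsymbol{x}_0\mid\boldsymbol{x}_t,\boldsymbol{y})$.

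Your final paragraph is more cautious than necessary. Compactness of $\mathcal{X}$ is convenient, but it is not needed, and contrary to your last sentence, neither is a first-moment assumption. Write $s=1-\alpha(t)>0$. Then $\sup_{\boldsymbol{u}}\|\boldsymbol{u}\|\exp(-\|\boldsymbol{u}\|^2/(2s))=\sqrt{s}\,e^{-1/2}$, so $\|\nabla_{\boldsymbol{x}_t}p(\boldsymbol{x}_t\mid\boldsymbol{x}_0)\|$ is bounded by a constant independent of both $\boldsymbol{x}_t$ and $\boldsymbol{x}_0$. That constant dominates the differentiated integrand for an arbitrary posterior $p(\boldsymbol{x}_0\mid\boldsymbol{y})$, so the interchange needs nothing further. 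Combining the same bound with $\|\boldsymbol{x}_0\|\le(\|\boldsymbol{x}_t\|+\|\boldsymbol{x}_t-\sqrt{\alpha(t)}\boldsymbol{x}_0\|)/\sqrt{\alpha(t)}$ shows that $\mathbb{E}[\boldsymbol{x}_0\mid\boldsymbol{x}_t,\boldsymbol{y}]$ is always finite.

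The only hypothesis your argument genuinely uses is $\alpha(t)<1$, i.e.\ a nondegenerate kernel. The statement already presupposes this by dividing by $1-\alpha(t)$, and it is worth saying explicitly because the paper's setup allows $\beta_{\min}=0$.
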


\clearpage
\section{Proof of main results}
\label{appendix: Proof of main results}
\paragraph{Notation and setup.} 
\begin{itemize}
    \item \textbf{Diffusion process}. As stated in Section \ref{subsec: Diffusion Models for Inverse Problems}, we consider a VP-SDE with a noise schedule $\beta(t)$ satisfying $0 \le \beta_{\min} \le \beta(t) \le \beta_{\max} < \infty$. Defining $\alpha(t) \coloneqq \exp\left(-\int_0^t \beta(s)\mathrm{d}s\right)$, the transition kernel is $p(\boldsymbol{x}_t|\boldsymbol{x}_0) = \mathcal{N}(\boldsymbol{x}_t; \sqrt{\alpha(t)}\boldsymbol{x}_0, (1-\alpha(t))\boldsymbol{I})$. Note that $\alpha(t) \in [e^{-T\beta_{\max}}, 1]$.
    \item \textbf{Endpoint Truncation}. In practice, the reverse process is evaluated from \(T\) up to a small positive terminal time $\delta>0$.  We adopt the same truncated-time setting in our theoretical analysis so that all score-based and denoising quantities are considered on $t\in[\delta,T]$.
    \item \textbf{Norms}. \(\|\cdot\|\): Unless otherwise stated, $\|\boldsymbol{x}\|$ denotes the Euclidean norm for vectors, and $\|F\| \coloneqq \sup_{\|\boldsymbol{x}\|=1} \|F\boldsymbol{x}\|$ denotes the operator norm for matrices or operators.
    \item \textbf{Denoised estimates}. In the following, we let $\boldsymbol{x}_{0}(\boldsymbol{x}_t)$ denote the ideal denoised estimate computed using the true score function. We reserve $\hat{\boldsymbol{x}}_{0}(\boldsymbol{x}_t)$ for the approximate estimate derived from the learned score network $\boldsymbol{s}_{\boldsymbol{\theta}}$.

\end{itemize}

\subsection{Assumptions}
We introduce the following assumptions, commonly adopted in the theoretical analysis of DMs and BIPs, which will be used throughout the paper:

\begin{assumption}[Score Network Regularity]
\label{appendix assumption: Score Network Regularity}
The score network
$\boldsymbol{s}_{\boldsymbol{\theta}}(\cdot,t)$ is continuously differentiable
with respect to $\boldsymbol{x}$ for time $t\in[\delta,T]$, and there exists a constant $L_s>0$ such that
\(
\|\nabla_{\boldsymbol{x}}\boldsymbol{s}_{\boldsymbol{\theta}}
(\boldsymbol{x},t)\|
\le L_s.
\)
\end{assumption}

\begin{assumption}[Score Network Accuracy]
    \label{appendix assumption: Score Network Accuracy}
    The expected error of the score network is bounded for \(t\in[\delta,T]\): 
    \(\mathbb{E}\big[\|\nabla_{\boldsymbol{x}}\log{p_t(\boldsymbol{x})}-\boldsymbol{s}_{\boldsymbol{\theta}}(\boldsymbol{x},t)\|^2_2\big]\le\epsilon_s(t)^2
    \), where \(\int_{\delta}^T\epsilon_s(t)^2<\infty\).
\end{assumption}

\subsection{Proof of Lemma \ref{lemma: stability of DPS}}

\begin{lemma}
    \label{lemma: Likelihood Score Sensitivity}
    Suppose the likelihood $p(\boldsymbol{y}|\boldsymbol{x})$ is a Gaussian distribution such that $p(\boldsymbol{y}|\boldsymbol{x}) = \mathcal{N}(\boldsymbol{y}; F\boldsymbol{x}, \sigma_y^2\boldsymbol{I})$. The likelihood score $\nabla_{\boldsymbol{x}}\log p(\boldsymbol{y}|\boldsymbol{x})$ is uniformly Lipschitz continuous with respect to the measurement $\boldsymbol{y}$. Specifically, for any $\boldsymbol{x} \in \mathbb{R}^{d_x}$, we have: \[\|\nabla_{\boldsymbol{x}}\log{p(\boldsymbol{y}|\boldsymbol{x})-\nabla_{\boldsymbol{x}}\log{p(\boldsymbol{y}^\prime|\boldsymbol{x})}}\|\le L_{\text{lld}}\|\boldsymbol{y}-\boldsymbol{y}^\prime\|, \quad \forall \boldsymbol{y}, \boldsymbol{y}^\prime \in \mathbb{R}^{d_y},
    \]
    where $L_{\text{lld}} = \|F\|/\sigma_y^2$.
\end{lemma}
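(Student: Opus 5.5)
We compute the likelihood score in closed form and observe that it is affine in $\boldsymbol{y}$ with a coefficient that does not depend on $\boldsymbol{x}$. Since $p(\boldsymbol{y}|\boldsymbol{x}) = \mathcal{N}(\boldsymbol{y}; F\boldsymbol{x}, \sigma_y^2\boldsymbol{I})$, we have
\[
\log p(\boldsymbol{y}|\boldsymbol{x}) = -\frac{1}{2\sigma_y^2}\|\boldsymbol{y}-F\boldsymbol{x}\|_2^2 + \text{const},
\]
where the constant depends only on $d_y$ and $\sigma_y$, not on $\boldsymbol{x}$ or $\boldsymbol{y}$. Differentiating with respect to $\boldsymbol{x}$ gives
\[
\nabla_{\boldsymbol{x}}\log p(\boldsymbol{y}|\boldsymbol{x}) = \frac{1}{\sigma_y^2}F^\top(\boldsymbol{y}-F\boldsymbol{x}).
\]

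Next, we take the difference at two measurements $\boldsymbol{y},\boldsymbol{y}'$ with the same $\boldsymbol{x}$. The terms $-F^\top F\boldsymbol{x}/\sigma_y^2$ cancel exactly, leaving
\[
\nabla_{\boldsymbol{x}}\log p(\boldsymbol{y}|\boldsymbol{x})-\nabla_{\boldsymbol{x}}\log p(\boldsymbol{y}'|\boldsymbol{x}) = \frac{1}{\sigma_y^2}F^\top(\boldsymbol{y}-\boldsymbol{y}').
\]
This cancellation is what makes the bound uniform over $\boldsymbol{x}\in\mathbb{R}^{d_x}$. In particular, no compactness of $\mathcal{X}$ is needed.

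Finally, we take Euclidean norms and apply the operator-norm inequality $\|F^\top\boldsymbol{v}\|\le\|F^\top\|\,\|\boldsymbol{v}\|$. Together with the identity $\|F^\top\|=\|F\|$ (the spectral norm is invariant under transposition, since both equal the largest singular value), this yields
\[
\|\nabla_{\boldsymbol{x}}\log p(\boldsymbol{y}|\boldsymbol{x})-\nabla_{\boldsymbol{x}}\log p(\boldsymbol{y}'|\boldsymbol{x})\|\le \frac{\|F\|}{\sigma_y^2}\,\|\boldsymbol{y}-\boldsymbol{y}'\|,
\]
which is the claim with $L_{\text{lld}}=\|F\|/\sigma_y^2$.

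The argument is elementary and has no real obstacle. The only point needing care is the transpose identity $\|F^\top\|=\|F\|$ for the operator norm defined in the setup. It follows from $\|F^\top\boldsymbol{v}\|=\sup_{\|\boldsymbol{x}\|=1}\langle F\boldsymbol{x},\boldsymbol{v}\rangle\le\|F\|\,\|\boldsymbol{v}\|$.
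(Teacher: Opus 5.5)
Your proposal is correct and follows essentially the same route as the paper: compute the Gaussian score $\frac{1}{\sigma_y^2}F^\top(\boldsymbol{y}-F\boldsymbol{x})$, note that the $F\boldsymbol{x}$ terms cancel in the difference, and bound $\|F^\top(\boldsymbol{y}-\boldsymbol{y}')\|$ by $\|F\|\,\|\boldsymbol{y}-\boldsymbol{y}'\|$. Your explicit justification of $\|F^\top\|\le\|F\|$ simply fills in a step the paper uses without comment.
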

\begin{proof}
    The score of the Gaussian measurement model is given as $\nabla_{\boldsymbol{x}} \log p(\boldsymbol{y}|\boldsymbol{x}) = \nabla F(\boldsymbol{x})^\top \boldsymbol{\Sigma}^{-1} (\boldsymbol{y} - F(\boldsymbol{x})),$ where $\nabla F(\boldsymbol{x})$ is the Jacobian of the measurement model. We have
    \begin{align*}
        \|\nabla_{\boldsymbol{x}} \log p(\boldsymbol{y}|\boldsymbol{x}) - \nabla_{\boldsymbol{x}} \log p(\boldsymbol{y}^\prime|\boldsymbol{x})\|&=\frac{1}{\sigma_y^2}\|F^\top (\boldsymbol{y} - \bcancel{F\boldsymbol{x}}) - F^\top (\boldsymbol{y}^\prime - \bcancel{F\boldsymbol{x}})\|\\
        &=\frac{1}{\sigma_y^2}\| F^\top (\boldsymbol{y} - \boldsymbol{y}^\prime)\| \le\underbrace{\frac{1}{\sigma_y^2}\|F\|}_{L_{\text{lld}}} \cdot \|\boldsymbol{y} - \boldsymbol{y}^\prime\|.
    \end{align*}
\end{proof}

\begin{theorem}[Girsanov Theorem]
    \label{theorem: diffusion_model_KL_bound}
    For any pair of well-defined diffusion processes $\{\boldsymbol{x}_t\}_{t=\delta}^T$ and $\{\boldsymbol{x}^\prime_t\}_{t=\delta}^T$ on $\mathbb{R}^{d_x}$ defined as follows:
    \begin{align*}
        &\mathrm{d}\boldsymbol{x}=f(\boldsymbol{x}_t,t)\mathrm{d}t+g(t)\mathrm{d}\boldsymbol{w}_t \\
        &\mathrm{d}\boldsymbol{x}^\prime=f^\prime(\boldsymbol{x}^\prime_t,t)\mathrm{d}t+g(t)\mathrm{d}\boldsymbol{w}^\prime_t
    \end{align*}
    where $f,f^\prime\colon\mathbb{R}^{d_x}\times [\delta,T]\rightarrow\mathbb{R}^{d_x}$ are the two drift functions and $g\colon[\delta,T]\rightarrow\mathbb{R}_{>0}$is the diffusion function. Assume that: (1) \(g(t)>0\) for all \(t\in[\delta,T]\), (2) \(p_T\ll p_T'\), and (3) Novikov's condition holds \(
        \mathbb E\left[
        \exp\left(
        \frac12\int_\delta^T
        \left\|
        \frac{
        f(\boldsymbol{x}_t,t)-f'(\boldsymbol{x}_t,t)
        }{g(t)}
        \right\|^2
        \mathrm dt
        \right)
        \right]<\infty .
        \)
     Let $p_t$ and $p^\prime_t$ denote the distributions of $\boldsymbol{x}_t$ and $\boldsymbol{x}_t^\prime$ for any $t\in [\delta,T]$; then we have
    \begin{equation}
        \label{eq: diffusion KL bound}
         D_{KL}(p_0\|p^\prime_0)\le D_{KL}(p_T\|p^\prime_T)+\int_{0}^T\frac{1}{2g(t)^2}\mathbb{E}\big[\|f(\boldsymbol{x}_t,t)-f^\prime(\boldsymbol{x}_t,t)\|^2\big]\,\mathrm{d}t.
    \end{equation}
\end{theorem}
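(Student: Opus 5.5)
The plan is to lift the comparison from the time-marginals to the path measures on $C([\delta,T];\mathbb{R}^{d_x})$, compute the path-space KL divergence exactly with Girsanov's change of measure, and then return to the terminal marginal by the data-processing inequality. Throughout I read both processes in the direction in which they are simulated. They start from $p_T$ and $p'_T$ at time $T$ and run down to the terminal time $\delta$, and the left-hand side of \eqref{eq: diffusion KL bound} refers to the law at that terminal time. Accordingly, the time integral on the right is understood over $[\delta,T]$, where the drifts are defined; extending it to $[0,T]$ with a nonnegative integrand only weakens the bound.

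Let $\mathbb{P}$ and $\mathbb{P}'$ be the path laws of $\{\boldsymbol{x}_t\}$ and $\{\boldsymbol{x}'_t\}$. \textbf{Step 1 (chain rule).} I disintegrate both measures with respect to the initial state $\boldsymbol{x}_T$. The chain rule for KL divergence gives
\begin{equation*}
D_{KL}(\mathbb{P}\|\mathbb{P}') = D_{KL}(p_T\|p'_T) + \mathbb{E}_{\boldsymbol{x}_T\sim p_T}\big[D_{KL}(\mathbb{P}^{\boldsymbol{x}_T}\|\mathbb{P}'^{\boldsymbol{x}_T})\big],
\end{equation*}
where $\mathbb{P}^{\boldsymbol{x}_T}$ and $\mathbb{P}'^{\boldsymbol{x}_T}$ are the conditional path laws given the starting point. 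Assumption (2), $p_T\ll p'_T$, makes the first term meaningful.

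\textbf{Step 2 (Girsanov).} Fix a starting point and set $\boldsymbol{u}_t=(f(\boldsymbol{x}_t,t)-f'(\boldsymbol{x}_t,t))/g(t)$, which is well defined because $g>0$. Since the two SDEs share the diffusion coefficient $g$, Girsanov's theorem identifies the density of $\mathbb{P}^{\boldsymbol{x}_T}$ with respect to $\mathbb{P}'^{\boldsymbol{x}_T}$ along trajectories of $\boldsymbol{x}$:
\begin{equation*}
\log\frac{\mathrm{d}\mathbb{P}^{\boldsymbol{x}_T}}{\mathrm{d}\mathbb{P}'^{\boldsymbol{x}_T}} = \int_\delta^T \boldsymbol{u}_t\cdot \mathrm{d}\boldsymbol{w}_t + \frac12\int_\delta^T\|\boldsymbol{u}_t\|^2\,\mathrm{d}t ,
\end{equation*}
with $\boldsymbol{w}$ a Brownian motion under $\mathbb{P}$. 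Novikov's condition (3) ensures that the associated stochastic exponential is a true martingale, so this density is a genuine Radon--Nikodym derivative.

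\textbf{Step 3 (take expectations).} Taking the $\mathbb{P}$-expectation, the Itô integral has mean zero. This gives
\begin{equation*}
D_{KL}(\mathbb{P}\|\mathbb{P}')=D_{KL}(p_T\|p'_T)+\int_\delta^T\frac{1}{2g(t)^2}\mathbb{E}\big[\|f(\boldsymbol{x}_t,t)-f'(\boldsymbol{x}_t,t)\|^2\big]\mathrm{d}t .
\end{equation*}

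\textbf{Step 4 (data processing).} The map sending a path to its value at the terminal time is measurable. The data-processing inequality for KL under this pushforward therefore yields $D_{KL}(p_\delta\|p'_\delta)\le D_{KL}(\mathbb{P}\|\mathbb{P}')$, which is \eqref{eq: diffusion KL bound}.

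The main obstacle is the mean-zero claim in Step 3, because Novikov's condition does not by itself give $\mathbb{E}\int\|\boldsymbol{u}_t\|^2\,\mathrm{d}t<\infty$ in a form that makes the Itô integral an $L^2$ martingale. I would first note that if $\mathbb{E}\int_\delta^T\|\boldsymbol{u}_t\|^2\,\mathrm{d}t=\infty$, the right-hand side is infinite and there is nothing to prove. Otherwise the integrand is square integrable, and the stochastic integral is a true martingale with zero mean. To avoid subtleties about which measure the expectation is taken under, I would localize: introduce stopping times $\tau_n=\inf\{t:\int_t^T\|\boldsymbol{u}_s\|^2\,\mathrm{d}s\ge n\}$, apply Steps 2--3 to the processes stopped at $\tau_n$, where everything is bounded, and let $n\to\infty$. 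In that limit the right-hand side converges by monotone convergence, and the left-hand side is controlled by lower semicontinuity of KL under weak convergence of the stopped path laws.
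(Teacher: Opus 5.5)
Your argument is correct and takes essentially the same approach as the paper. The paper does not write the proof out; it cites the standard path-space argument (Theorem 1 of Song et al.\ and the other listed lemmas), which is exactly your KL chain rule over the initial state, followed by the Girsanov density and data processing to the terminal marginal. One correction: your ``main obstacle'' is not one, since $y\le 2e^{y/2}$ for $y\ge 0$ means the stated Novikov condition (taken under the law of $\boldsymbol{x}$) already gives $\mathbb{E}\int_\delta^T\|\boldsymbol{u}_t\|^2\,\mathrm{d}t\le 2\,\mathbb{E}\exp\bigl(\tfrac12\int_\delta^T\|\boldsymbol{u}_t\|^2\,\mathrm{d}t\bigr)<\infty$; hence $\int\boldsymbol{u}_t\cdot\mathrm{d}\boldsymbol{w}_t$ is an $L^2$ martingale with mean zero under $\mathbb{P}$, Step 3 goes through directly, and the localization is unnecessary.
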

\textit{Proof.} The bound is a direct consequence of the Girsanov theorem, which relates the probability measures of diffusion processes under a change of drift. The result has been well established as standard in the analysis of generative diffusion models and is used to upper-bound the KL divergence between the true data distribution and the generated distribution by the time-accumulated mismatch between their respective score functions. For detailed derivations, we refer the reader to prior work, such as Theorem 1 in \citep{song2021maximum}, Lemma 2.22 in \citep{wu2024principled}, and Lemma C.1 in \citep{chen2025solving}. A version adapted for the probability flow ODE formulation can be found in Theorem 3.1 of \citep{lu2022maximum}.

In the VP-SDE, the reverse-time drift functions corresponding to the posterior $p_t(\boldsymbol{x}_t|\boldsymbol{y})$ given the measurement $\boldsymbol{y}$ are defined as:
\[
    f(\boldsymbol{x}_t,t) = -\beta(t)\left[\frac{\boldsymbol{x}_t}{2} + \nabla \log p_t(\boldsymbol{x_t}|\boldsymbol{y})\right].
\]
Substituting the diffusion coefficient $g(t) = \sqrt{\beta(t)}$, the squared difference appearing in the Girsanov Theorem (Equation \eqref{eq: diffusion KL bound}) simplifies to:
\[
    \frac{1}{2g(t)^2}\mathbb{E}\left[\|f(\boldsymbol{x}_t,t) - f^\prime(\boldsymbol{x}_t,t)\|^2\right] = \frac{\beta(t)}{2} \mathbb{E}\left[\|\nabla \log p_t(\boldsymbol{x}_t|\boldsymbol{y}) - \nabla \log p^\prime_t(\boldsymbol{x}_t|\boldsymbol{y})\|^2\right].
\]

\begin{lemma}
\label{lemma: Bounded Jacobian of the Posterior Mean}
Assume the forward process (VP) is defined by the SDE $\mathrm{d}\boldsymbol{x}_t = -\frac{1}{2}\beta(t)\boldsymbol{x}_t\,\mathrm{d}t+\sqrt{\beta(t)} \mathrm{d}\boldsymbol{w}_t$ as in Section \ref{subsec: Diffusion Models for Inverse Problems}. The Gaussian kernel is given by $\boldsymbol{x}_t|\boldsymbol{x}_0\sim\mathcal{N}(\sqrt{\alpha(t)}\boldsymbol{x}_0,\left(1-\alpha(t)\right)\boldsymbol{I})$, where $\alpha(t)=\exp{\left(-\int_0^t\beta(s)\,\mathrm{d}s\right)}$. Let $\hat{\boldsymbol{x}}_0(\boldsymbol{x}_t)$ be the Tweedie estimator approximating the posterior mean $\mathbb{E}[\boldsymbol{x}_0|\boldsymbol{x}_t]$ with the score network \(\boldsymbol{s}_{\boldsymbol{\theta}}\). Under Assumption \ref{appendix assumption: Score Network Regularity}, the Jacobian of this estimator is bounded:
\(\|J_{\hat{\boldsymbol{x}}_0}(\boldsymbol{x}_t)\|\le \exp{(T\beta_{\max}/2)}(1+L_s)\coloneq L_{\hat{\boldsymbol{x}}_0}\) for any time \(t\in[\delta,T]\). 
\end{lemma}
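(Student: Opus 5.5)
The plan is to differentiate the Tweedie estimator directly and bound each term. Start from the definition $\hat{\boldsymbol{x}}_0(\boldsymbol{x}_t)=\frac{1}{\sqrt{\alpha(t)}}\bigl(\boldsymbol{x}_t+(1-\alpha(t))\boldsymbol{s}_{\boldsymbol{\theta}}(\boldsymbol{x}_t,t)\bigr)$ from Proposition~\ref{proposition: First-order Tweedie Approximation}, with the true score replaced by the network. Assumption~\ref{appendix assumption: Score Network Regularity} says $\boldsymbol{s}_{\boldsymbol{\theta}}(\cdot,t)$ is $C^1$ on $[\delta,T]$, so the map is differentiable, and its Jacobian is
\begin{equation*}
J_{\hat{\boldsymbol{x}}_0}(\boldsymbol{x}_t)=\frac{1}{\sqrt{\alpha(t)}}\Bigl(\boldsymbol{I}+(1-\alpha(t))\nabla_{\boldsymbol{x}}\boldsymbol{s}_{\boldsymbol{\theta}}(\boldsymbol{x}_t,t)\Bigr).
\end{equation*}

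Next, apply the triangle inequality and submultiplicativity of the operator norm. This gives $\|J_{\hat{\boldsymbol{x}}_0}(\boldsymbol{x}_t)\|\le \alpha(t)^{-1/2}\bigl(1+(1-\alpha(t))L_s\bigr)$, using $\|\boldsymbol{I}\|=1$ and the Jacobian bound $L_s$ from Assumption~\ref{appendix assumption: Score Network Regularity}.

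The remaining step is to make this bound uniform in $t$ using the schedule bounds from the setup:

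\begin{itemize}
\item Since $\beta(t)\ge 0$, we have $\alpha(t)\le 1$, hence $0\le 1-\alpha(t)\le 1$. The bracket is therefore at most $1+L_s$.
\item Since $\beta(t)\le\beta_{\max}$, we have $\int_0^t\beta(s)\,\mathrm{d}s\le T\beta_{\max}$ for every $t\in[\delta,T]$. This gives $\alpha(t)\ge e^{-T\beta_{\max}}$, so $\alpha(t)^{-1/2}\le \exp(T\beta_{\max}/2)$.
\end{itemize}

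Multiplying the two factors yields $\|J_{\hat{\boldsymbol{x}}_0}(\boldsymbol{x}_t)\|\le \exp(T\beta_{\max}/2)(1+L_s)=L_{\hat{\boldsymbol{x}}_0}$, uniformly over $\boldsymbol{x}_t$ and $t\in[\delta,T]$.

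There is no genuine obstacle here. The only point needing care is that the bound must not degenerate, and this is exactly what the lower bound $\alpha(t)\ge e^{-T\beta_{\max}}$ secures. The truncation at $\delta$ is not needed for this particular bound; it matters only for Assumption~\ref{appendix assumption: Score Network Regularity} to apply. I would also note that the estimate is for the learned $\hat{\boldsymbol{x}}_0$, not the ideal posterior mean, so no second-order Tweedie identity is required.
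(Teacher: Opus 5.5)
Your proposal is correct and follows the paper's proof essentially line for line. Both differentiate the Tweedie estimator, apply the triangle inequality with $\|\nabla_{\boldsymbol{x}}\boldsymbol{s}_{\boldsymbol{\theta}}\|\le L_s$, and then use $1-\alpha(t)\le 1$ and $\alpha(t)^{-1/2}\le \exp(T\beta_{\max}/2)$. Your uniform-in-$t$ step is slightly cleaner than the paper's: the paper passes through $\alpha(T)^{-1/2}$ and writes $\alpha(T)^{-1/2}=\exp(T\beta_{\max}/2)$ as an equality, although only $\le$ holds in general.
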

\begin{proof}
    The proof relies on Tweedie's formula, which relates the score of the marginal distribution  to the posterior expectation. For the Gaussian perturbation kernel defined above and Proposition \ref{proposition: First-order Tweedie Approximation}, the Tweedie estimator is given as \(\hat{\boldsymbol{x}}_0(\boldsymbol{x}_t) = \frac{1}{\sqrt{\alpha(t)}} \left(\boldsymbol{x}_t + \left((1-\alpha(t)\right) \boldsymbol{s}_{\boldsymbol{\theta}}(\boldsymbol{x}_t, t)\right). \) We compute the Jacobian $J_{\hat{\boldsymbol{x}}_0}(\boldsymbol{x}_t) = \nabla_{\boldsymbol{x}_t} \hat{\boldsymbol{x}}_0(\boldsymbol{x}_t)$ by differentiating with respect to $\boldsymbol{x}_t$; then, for any time \(t\in[\delta,T]\),
    \begin{align*}
    \|J_{\hat{\boldsymbol{x}}_0}(\boldsymbol{x}_t)\| & = \|\frac{1}{\sqrt{\alpha(t)}} \Big(\nabla_{\boldsymbol{x}_t} \boldsymbol{x}_t + \left(1-\alpha(t)\right) \nabla_{\boldsymbol{x}_t} \boldsymbol{s}_{\boldsymbol{\theta}}(\boldsymbol{x}_t, t)\Big)\|\\
    &=\frac{1}{\sqrt{\alpha(t)}}\|I+\left(1-\alpha(t)\right)\nabla_{\boldsymbol{x}_t} \boldsymbol{s}_{\boldsymbol{\theta}}(\boldsymbol{x}_t, t)\|\\
    &\le \frac{1}{\sqrt{\alpha(t)}}\Big(\|I\|+\left(1-\alpha(t)\right)\|\nabla_{\boldsymbol{x}_t} \boldsymbol{s}_{\boldsymbol{\theta}}(\boldsymbol{x}_t, t) \| \Big)\\
    &\le \frac{1}{\sqrt{\alpha(t)}}\big(1+\left(1-\alpha(t)\right)L_{s}\big)\\
    &\le\alpha(T)^{-1/2}(1+L_s)=\exp{(T\beta_{\max}/2)}(1+L_s)
\end{align*}
Moreover, \(\int_{\delta}^TL_{\hat{\boldsymbol{x}}_0}\mathrm{d}t\le(T-\delta)\exp{(T\beta_{\max}/2)}(1+L_s)<\infty\).
\end{proof}

\textbf{\textit{Proof} of Lemma \ref{lemma: stability of DPS}.} 
Given the measurement $\boldsymbol{y}\in\mathbb{R}^{d_y}$, the difference between the two drift terms in the reverse SDEs at time $t$ is
\begin{equation*}
    \begin{split}
        \nabla_{\boldsymbol{x}_t}\log{p_t(\boldsymbol{x}_t|\boldsymbol{y}_*)}- \nabla_{\boldsymbol{x}_t}\log{p_t(\boldsymbol{x}_t|\boldsymbol{y})}&=\bcancel{\boldsymbol{s}_{\boldsymbol{\theta}}(\boldsymbol{x}_t,t)}+\nabla_{\boldsymbol{x}_t}\log{\tilde{p}_t(\boldsymbol{y}_*|\boldsymbol{x}_t)}-\bcancel{\boldsymbol{s}_{\boldsymbol{\theta}}(\boldsymbol{x}_t,t)}-\nabla_{\boldsymbol{x}_t}\log{\tilde{p}_t(\boldsymbol{y}|\boldsymbol{x}_t)}\\
        &=\nabla_{\boldsymbol{x}_t}\log{\tilde{p}_t(\boldsymbol{y}_*|\boldsymbol{x}_t)}-\nabla_{\boldsymbol{x}_t}\log{\tilde{p}_t(\boldsymbol{y}|\boldsymbol{x}_t)}.
    \end{split}
\end{equation*}
By the chain rule for the DPS likelihood approximation,
\[
\nabla_{\boldsymbol{x}_t}\log \tilde{p}_t(\boldsymbol{y}\mid \boldsymbol{x}_t)
=
J_{\hat{\boldsymbol{x}}_0}(\boldsymbol{x}_t)^\top
\nabla_{\hat{\boldsymbol{x}}_{0|t}}
\log p(\boldsymbol{y}\mid \hat{\boldsymbol{x}}_{0|t}).
\]
The difference between the two drift terms in the reverse SDEs can be bounded as
\[
\|f(\boldsymbol{x}_t,t)-f'(\boldsymbol{x}_t,t)\|=\|-\beta(t)\left[\nabla_{\boldsymbol{x}_t}\log{\tilde{p}_t(\boldsymbol{y}_*|\boldsymbol{x}_t)}-\nabla_{\boldsymbol{x}_t}\log{\tilde{p}_t(\boldsymbol{y}|\boldsymbol{x}_t)}\right]\|
\le
\beta(t)L_{\hat{\boldsymbol{x}}_0}L_{\mathrm{lld}}
\|\boldsymbol{y}_*-\boldsymbol{y}\|. 
\]
Since \(g(t)=\sqrt{\beta(t)}\) for the VP-SDE and \(\beta(t)\le \beta_{\max}\), the above bound also provides square integrability:
\(
\|\boldsymbol{y}_*-\boldsymbol{y}\|^2\int_{\delta}^{T}
\frac{\beta(t)^2L_{\hat{\boldsymbol{x}}_0}^2}{g(t)^2}
\,\mathrm{d}t
=
\|\boldsymbol{y}_*-\boldsymbol{y}\|^2L_{\hat{\boldsymbol{x}}_0}^2L^2_{\text{lld}}\int_{\delta}^{T}
\beta(t)
\,\mathrm{d}t
<\infty
\) by lemmas~\ref{lemma: Likelihood Score Sensitivity} and~\ref{lemma: Bounded Jacobian of the Posterior Mean}. Hence, the Girsanov Theorem \ref{theorem: diffusion_model_KL_bound} can be applied
\[D_{KL}\big(p_\delta(\boldsymbol{x}|\boldsymbol{y}_*)\|p_\delta(\boldsymbol{x}|\boldsymbol{y})\big) \le D_T + \int_\delta^T\frac{\beta(t)^2}{2g(t)^2}\mathbb{E}\Big[\|\nabla_{\boldsymbol{x}_t}\log{p_t(\boldsymbol{x}_t|\boldsymbol{y}_*)}- \nabla_{\boldsymbol{x}_t}\log{p_t(\boldsymbol{x}_t|\boldsymbol{y})} \|^2 \Big] \mathrm{d}t.\] Since both initial distributions are standard Gaussian, the initial term \(D_T\) vanishes, yielding
\[
D_{\mathrm{KL}}\big(p_\delta(\cdot\mid\boldsymbol{y}_*)\,\|\,p_\delta(\cdot\mid\boldsymbol{y})\big)
\le C_{\mathrm{stable}}\|\boldsymbol{y}_*-\boldsymbol{y}\|^2,\,\,
C_{\mathrm{stable}}\coloneq\frac{1}{2}L_{\text{lld}}^2L_{\hat{\boldsymbol{x}}_0}^2\int_{\delta}^T\beta(t)\mathrm{d}t < \infty.
\]

\subsection{Proof of Theorem \ref{theorem: outlier-robust}}
\label{subsec appendix: Proof of Theorem  outlier-robust}
\begin{proof}[\textbf{Proof of Theorem \ref{theorem: outlier-robust}}]
The proof follows the same Girsanov argument as above. The only change is the
likelihood-score sensitivity bound. For the robust likelihood \(\tilde{p}^{(\ell)}\), the
approximate likelihood score is given by chain rule as 
\[
\nabla_{\boldsymbol{x}_t}\log \tilde p_t^{(\ell)}(\boldsymbol y\mid \boldsymbol x_t)
=
J_{\hat{\boldsymbol x}_0}(\boldsymbol x_t,t)^\top
\nabla_{\hat{\boldsymbol x}_{0|t}}
\ell_{\boldsymbol y}(\hat{\boldsymbol x}_{0|t}).
\]
where
\(
\ell_{\boldsymbol y}(\hat{\boldsymbol x}_{0|t})
=
\sum_{i=1}^{d_y}\rho(r_i)\), and \(
r_i
=
y_i-[ F\hat{\boldsymbol x}_{0|t}]_i.
\)
For the robust loss
\(
\rho(r)=\frac{1}{2\sigma_y^2}w(r)r^2,
\)
we have
\(
\psi(r):=\rho'(r)
=
\frac{1}{2\sigma_y^2}
\left(2r w(r)+r^2 w'(r)\right).
\)
Under the robust weight condition on \(w\) given in Equation \ref{eq: robust_weight_condition}, both \(|r w(r)|\) and \(|r^2w'(r)|\) are uniformly
bounded. Hence there exists \(K<\infty\) such that
\(
\sup_{r\in\mathbb R}|\psi(r)|\le K .
\)
Therefore,
\[
\left\|
\nabla_{\hat{\boldsymbol x}_{0|t}}
\ell_{\boldsymbol y}^{(\rho)}(\hat{\boldsymbol x}_{0|t})
\right\|
=
\left\|
\sum_{i=1}^{d_y}\psi(r_i)\nabla_{\hat{\boldsymbol x}_{0|t}} r_i
\right\|
\le
K \|F\| \sqrt{d_y},
\]
uniformly over \(\boldsymbol y\). Thus, for any \(\boldsymbol y\),
\(
\left\|
\nabla_{\hat{\boldsymbol x}_{0|t}}
\ell_{\boldsymbol y_*}
-
\nabla_{\hat{\boldsymbol x}_{0|t}}
\ell_{\boldsymbol y}
\right\|
\le
2K \|F\|\sqrt{d_y}.
\)
With Lemma \ref{lemma: Bounded Jacobian of the Posterior Mean}
we obtain
\[
\left\|
\nabla_{\boldsymbol{x}_t}\log \tilde p_t^{(\rho)}(\boldsymbol y_*\mid \boldsymbol{x}_t)
-
\nabla_{\boldsymbol{x}_t}\log \tilde p_t^{(\rho)}(\boldsymbol y\mid \boldsymbol{x}_t)
\right\|^2
\le
4L_{\hat{\boldsymbol x}_0}^2K^2L_F^2d_y .
\]
Applying Theorem~\ref{theorem: diffusion_model_KL_bound} and using
\(g(t)^2=\beta(t)\) for the VP-SDE gives
\[
\begin{aligned}
D_{\mathrm{KL}}
\left(
p_\delta^{(\ell)}(\cdot\mid \boldsymbol y_*)
\,\|\,
p_\delta^{(\ell)}(\cdot\mid \boldsymbol y)
\right)
&\le
\frac12
\int_\delta^T
\beta(t)
\,
4L_{\hat{\boldsymbol x}_0}^2K^2L_F^2d_y
\,\mathrm dt  \\
&\le
2(T-\delta)\beta_{\max}
L_{\hat{\boldsymbol x}_0}^2K^2L_F^2d_y<\infty .
\end{aligned}
\]
Therefore, the KL divergence under the robust likelihood is uniformly bounded
with respect to the measurement discrepancy, unlike the Gaussian likelihood
case whose bound grows with \(\|\boldsymbol y_*-\boldsymbol y\|^2\).

\end{proof}

\clearpage
\section{Supplementary about RDP}
\subsection{General Pseudo Algorithm}
We include the algorithm design of RDP for generic diffusion posterior sampling in this section.
\label{Appendix subsec: General Pseudo Algorithm}
\begin{algorithm}[H]
\caption{RDP Plug in for Generic Diffusion Posterior Sampling}
\label{alg: general robust diffusion posterior sampling}
\begin{algorithmic}[1]
\REQUIRE observation \(\boldsymbol{y}\), score network \(\boldsymbol{s}_{\boldsymbol{\theta}}\), noise schedule \(\{\beta_t\}_{t=0}^T\), weight function \(w(\cdot)\), temperature \(\{\tau_t\}_{t=0}^T\)
\STATE Initialize \(\boldsymbol{x}_T\sim\mathcal{N}(\boldsymbol{0},\boldsymbol{I})\)
\FOR{$t=T,...,1$}
    \STATE \(\alpha_t = 1 - \beta_t, \quad \bar{\alpha}_t = \prod_{i=1}^t \alpha_i\)
    \STATE \(\hat{\boldsymbol{x}}_{0|t} = \frac{1}{\sqrt{\bar{\alpha}_t}}(\boldsymbol{x}_t + (1-\bar{\alpha}_t)\boldsymbol{s}_{\boldsymbol{\theta}}(\boldsymbol{x}_t, t))\)
    \STATE \(\hat{\boldsymbol{y}} = F \hat{\boldsymbol{x}}_{0|t}\) and \(\boldsymbol{r} = \boldsymbol{y} - \hat{\boldsymbol{y}}\)
    \STATE \(\boldsymbol{W} = \text{diag}(w(r_1), \dots, w(r_m))\)
    \STATE \(\ell_{\boldsymbol{y}}(\boldsymbol{x}_t)=\boldsymbol{W}^\top\colorbox{green!20}{\(\log p(\boldsymbol{y}|\hat{\boldsymbol{x}}_0(\boldsymbol{x}_t))\)}\) 
    \STATE \(\boldsymbol{x}_{t-1}\)=\colorbox{green!20}{UpdateStep(\(\boldsymbol{x}_t,\boldsymbol{s}_{\boldsymbol{\theta}},\nabla\ell_{\boldsymbol{y}}(\boldsymbol{x}_t),\alpha_t,\beta_t\))}
\ENDFOR
\end{algorithmic}
\end{algorithm}

\begin{itemize}
    \item The likelihood term $p(\boldsymbol{y} \mid \hat{\boldsymbol{x}}_{0}(\boldsymbol{x}_t))$ enforces data consistency; however, its specific formulation depends on the noise and operator assumptions. Our method can be used to \textit{robustify} a broad family of algorithms. In this paper, we focus on the DPS algorithm, which approximates the likelihood via the surrogate mean $\hat{\boldsymbol{x}}_{0|t}$ as a point estimate. For $\Pi$GDM \citep{song2023pseudoinverse} (which we use in our experiments), the method utilizes SVD to define the likelihood in the spectral domain, modeled as $p(\boldsymbol{y}|\boldsymbol{x}_t)\approx\mathcal{N}(\boldsymbol{y} \mid F(\hat{\boldsymbol{x}}_{0|t}), r_t\boldsymbol{I})$. We integrate RDP by reweighting the log-likelihood objective: $\ell_{\text{RDP}} \approx \boldsymbol{W}^\top \log \mathcal{N}(\boldsymbol{y} \mid F(\hat{\boldsymbol{x}}_{0|t}), r_t\boldsymbol{I})$, while additionally incorporating the annealing variance \(r_t\). Furthermore, \cite{boys2024tweedie} proposes a method for matching the variance using the second-order Tweedie's formula. This higher-order approximation of variance can similarly be incorporated with RDP.

    \item The \textsc{UpdateStep} is solver-agnostic, allowing robust guidance to be integrated into various reverse-time discretizations. This includes standard SDE and ODE probability flow sampling, Predictor-Corrector (PC) schemes where Langevin dynamics refine the posterior estimate via $\nabla \ell_{\boldsymbol{y}}$, as well as recent progress in recent midpoint estimation approaches such as \cite{janati2024divide,moufad2025variational}. Recent midpoint estimation such as \cite{janati2024divide,moufad2025variational}, and Furthermore, the guidance term seamlessly extends to high-order solvers—such as DPM-Solver++ \citep{lu2025dpm} or Heun's method \citep{karras2022elucidating}, enabling efficient sample generation with fewer steps.
    
\end{itemize}

\subsection{Other Weight functions}
\label{Appendix subsec: Other Weight function choices Discussion}
\begin{itemize}
    \item \textbf{Mahalanobis-based distance.} When the measurement noise exhibits a nontrivial heterogeneous structure (e.g., $\sigma_i\neq\sigma_j$ for $i\neq j$ in the Gaussian case), it is natural to measure residuals using Mahalanobis-based (MB) distance with a diagonal covariance $\Sigma=(\sigma_1,...,\sigma_{d_y})$, leading to weights \(w_i(y_i,\hat{y}_{t,i})=(1+\frac{(y_i-\hat{y}_{t,i})^2}{\sigma_ic^2})^{-\frac{1}{2}}\). 
    \item \textbf{Global scale.} Many diffusion samplers choose the step size based on the residual norm. For example, \cite{chung2023diffusion} suggests choosing 
    this step size to be inversely proportional to the measurement error norm 
    $\|\boldsymbol{y}-F(\hat{\boldsymbol{x}}_0)\|_2$ \citep{daras2024survey}. Within the GB framework, this corresponds to taking a global constant \(w=\frac{c}{\varepsilon + \|\boldsymbol{y}-\hat{\boldsymbol{y}}\|_2}\) for some constant $c\in(0,1]$ and a small $\varepsilon>0$. This stabilizes the reverse process by preventing excessively large gradient steps. However, because it uses global scaling, it down-weights clean observations along with outliers and can thus be viewed as a special case of RDP.
    \item \textbf{Huber (\(L_1\) Loss).} Another well known robust loss function is Huberized Loss \citep{huber2011robust}, which combines the sensitivity of squared error ($L_2$) for small residuals with the robustness of absolute error ($L_1$) for large residuals. For a residual $r_i = y_i - \hat{y}_{t,i}$ and a specified transition threshold $c$, i.e.,
    \[    \ell_c(r_i) =
    \begin{cases}
    \frac{1}{2}r_i^2, & |r_i| \le c, \\[3pt]
    c\left(|r_i|-\frac{1}{2}c\right), & \text{otherwise} .
    \end{cases}\]
    This loss preserves the quadratic Gaussian penalty for small residuals, while clipping the gradient magnitude to \(c\) for large residuals. As a result, large corrupted measurements are prevented from dominating the guidance.

\end{itemize}

\clearpage
\section{Additional Experiment Details}
\label{sec: Additional Experiment Details}

This appendix provides additional details on the experimental setup, including task definitions, measurement models, noise configurations, hyperparameters, and implementation details. All experiments were run on two NVIDIA \texttt{A100} GPUs with \texttt{40GiB} memory.

\subsection{Tasks}
We evaluate our proposed method across a diverse set of inverse problems, ranging from standard linear image restoration tasks to complex nonlinear scientific imaging. The measurement process is generally modeled as $\boldsymbol{y} = F(\boldsymbol{x}) + \boldsymbol{n}$, where $F(\cdot)$ represents the measurement model and $\boldsymbol{\epsilon}$ denotes the measurement noise.

\begin{itemize}
    \item (Linear) Inverse Scattering: This task represents a linearized version of the general nonlinear inverse scattering problem. The measurement model is given by:
    \(
    \boldsymbol{y} = \mathbf{H}(\boldsymbol{u}_{\text{in}} \odot \boldsymbol{x}) + \boldsymbol{\epsilon},
    \)
    where $\boldsymbol{u}_{\text{in}}$ is the known incident light field, $\mathbf{H}$ is the Green's function propagator, and $\odot$ denotes the element-wise product. See more details in Section 3 of \cite{zheng2025inversebench}

    \item Image Inpainting: The measurement model $F$ is a diagonal masking matrix $\mathbf{M} \in \{0, 1\}^d$, which effectively removes a random fraction of the pixels. The measurement is given by $\boldsymbol{y} = \mathbf{M} \odot \boldsymbol{x} + \boldsymbol{\epsilon}$. The mask \(\mathbf{M}\) consists of a $32\times32$ box placed at a random location within the image.

    \item (Gaussian) Deblurring: The measurement model $F$ corresponds to a convolution with an isotropic Gaussian kernel. The kernel is chosen with a standard deviation $\sigma_{\text{blur}}=3$ and a kernel size $61 \times 61$. 

    \item Phase Retrieval: A critical problem in coherent diffractive imaging (e.g., X-ray crystallography), where detectors capture only the intensity of the diffracted wave. The measurement model is defined as $\mathbf{y} = |\mathcal{T}(\mathbf{x})| + \mathbf{n}$, where $\mathcal{T}$ denotes the 2D discrete Fourier transform and $|\cdot|$ is the element-wise magnitude operator. This problem is severely ill-posed due to the loss of phase information.
\end{itemize}
\begin{figure}[hb]
    \centering
    \includegraphics[width=0.8\linewidth]{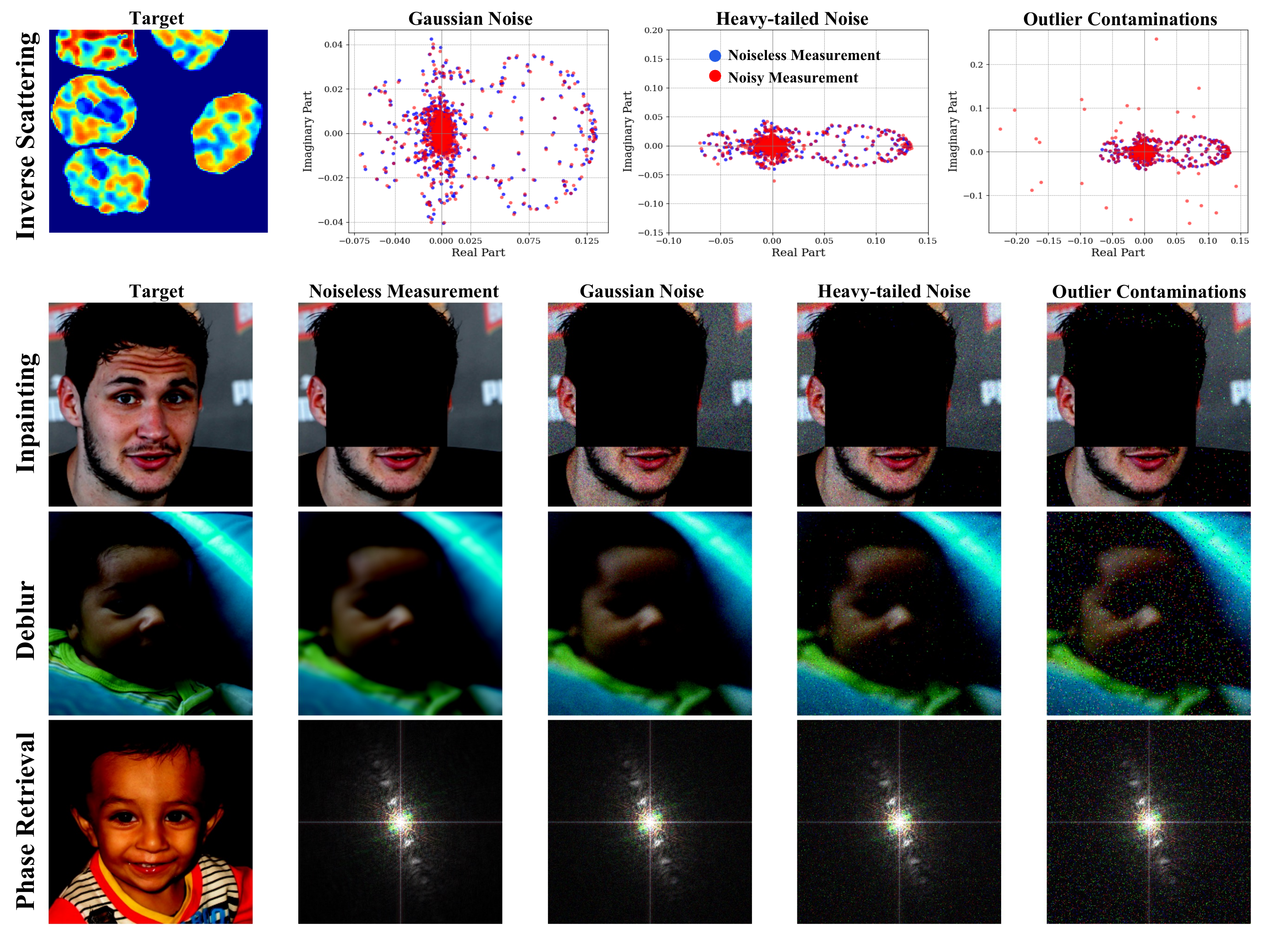}
    \caption{Visualization of measurements across three noise schemes (left$\rightarrow$right: Gaussian, heavy-tailed, outlier).}
    \label{fig: all_obs}
\end{figure}
\clearpage
\subsection{Experimental Design}
\label{subsec: Noise models}
In our numerical experiments in Section~\ref{sec: experiments}, we assess the robustness of the proposed modular approaches under three representative settings that commonly arise in scientific inverse problems:
\begin{itemize}
    \item Well-specified Likelihood (Gaussian Noise): 
    We utilize standard additive white Gaussian noise as the baseline control. This represents the ideal setting where the likelihood assumed by standard solvers matches the true noise distribution.
    
    \item Distributional Misspecification (Heavy-tailed Noise): 
    To test robustness against violations of the Gaussian assumption, we model noise using a Student-$t$ distribution. The scale parameter is calibrated to match the standard deviation of the corresponding Gaussian baseline, while the degrees of freedom $\nu$ control the tail heaviness. Heavy-tailed errors are extensively studied in robust estimation problems such as in \cite{aravkin2014robust} and \cite{janjic2018representation}. 

    \item Outlier Contamination (Sparse Impulsive Noise): 
    We model corruption by replacing a fraction of the measurements with extreme, arbitrary values. 
    The severity is controlled by two parameters: the corruption ratio $p$ (the percentage of contaminated measurements) and the amplification factor $m$ (the magnitude of the outlier values). This setting challenges the solver to identify and reject localized corruptions, simulating practical issues such as heat noise spikes or data transmission scattering errors. To simulate outlier corruptions, we first sample random noise from the base Gaussian distribution and then scale the magnitude of $p\%$ of the total dimensions by a factor of $m$.
\end{itemize}

\begin{table}[h]
    \centering
    \caption{Noise Schemes Summary. Well-specified lists the standard deviation $\sigma_y$ used for Gaussian baselines. Misspecification details the Student-$t$ parameters ($\nu$ and calibrated scale). Contamination specifies the outlier percentage ($p$) and the magnitude factor ($m$), where $m$ denotes the multiplier of the signal's peak dynamic range.}
    \resizebox{\linewidth}{!}{
    \begin{tabular}{c|c|c|c}
        \toprule
        & Well-Specified ($\mathcal{N}(0, \sigma_y^2)$)& Misspecification (Student-$t$) & Contamination (Outliers $p$, Mag $m$)\\
        \midrule
        IS & $\sigma_y = 10^{-3}$ & $\nu=2.2$ & $p=1\%$, $m=30\times$ \\
        FFHQ-I & $\sigma_y = 0.05$ & $\nu=2.5$ & $p=5\%$, $m=30\times$ \\
        FFHQ-D & $\sigma_y = 0.05$ & $\nu=2.5$ & $p=5\%$, $m=30\times$ \\
        FFHQ-PR & $\sigma_y = 10^{-3}$ & $\nu=2.5$ & $p=1\%$, $m=10\times$ \\
        \bottomrule
    \end{tabular}
    }
    \label{tab:noise-schemes}
\end{table}





\subsection{Pretrained Diffusion Model Details}
We use pretrained diffusion models with checkpoints provided by \textit{InverseBench} \citep{zheng2025inversebench}, which employ U-Net backbones, as in \cite{song2021score}. Detailed configurations are listed in Table \ref{tab: network configurations}.
\begin{table}[H]
    \centering
     \caption{Model network configurations for pre-trained diffusion models.}
    \begin{tabular}{cccccc}
    \midrule
         & Inverse Scattering  & FFHQ  \\
    \noalign{\hrule}
    Input resolution   &  $128\times128$  &    $3\times256\times256$      \\
    \# Attention blocks in encoder/decoder  & $5$ &  $6$\\
    \# Residual blocks per resolution & $1$ &  $3$ \\
    Attention resolutions & $16$ & $16$ \\
    \# Parameters & 26.8M & 93.6M \\
    \midrule
    \end{tabular}
    \label{tab: network configurations}
\end{table}

\begin{table}[H]
    \centering
    \caption{Hyperparameter tuning for different sampling methods and tasks. For other hyperparameters in PnPDM, we found it does not have significant effects to the results and we kept the configurations with \cite{zheng2025inversebench}}
      \begin{adjustbox}{width=\textwidth} 
    \begin{tabular}{lccccccccc}
        \toprule
        & \multicolumn{2}{c}{Inverse Scattering} & \multicolumn{2}{c}{FFHQ-Inpaint} & \multicolumn{2}{c}{FFHQ-Deblur} & \multicolumn{2}{c}{FFHQ-PR} \\
        \cmidrule(lr){2-3} \cmidrule(lr){4-5} \cmidrule(lr){6-7} \cmidrule(lr){8-9}
        \textbf{Method} & Used & Range & Used & Range & Used & Range & Used & Range \\
        \midrule
        DPS &  &  &  &  & & & \\
        Guidance Scale & $350$ & $[100,500]$ & $1$ & $[0.1,50]$ & $5$ & $[0.1,50]$  & $0.5$ & $[0.1,50]$ \\
        \noalign{\hrule}
        LGD &  &  &  &  & & & \\
        Guidance Scale &  $2500$ & $[500, 5000]$ & $1$ & $[0.1,50]$ & $5$ & $[0.1,100]$  & $1$ & $[0.1,100]$ \\
        \noalign{\hrule}
        $\Pi$GDM &  &  &  &  & & &  \\
        Stochasticity $\eta$ &  $0.3$ & $(0, 1]$ & $0.95$ & $(0, 1]$ & $0.85$ & $(0, 1]$ & - & - \\
        \noalign{\hrule}
        PnPDM &  &  &  &  &  & &\\
        Annealing Max $\rho_{\text{max}}$&  $20$ & $[1, 100]$  &  $20$ & $[1, 100]$  &  $10$ & $[1, 100]$ & $10$ & $[1, 100]$\\
        Annealing decay rate $\rho$  & $0.9$ & $[0.5,1)$  & $0.9$ & $[0.5,1)$ & $0.9$ & $[0.5,1)$ & $0.9$ & $[0.5,1)$ \\
        Langevin step size $\tau$ & $5\times10^{-4}$ & $(0,1]$ &   $5\times10^{-4}$ & $(0,1]$  & $5\times10^{-4}$ & $(0,1]$  & $1\times10^{-4}$ & $(0,1]$\\
        \noalign{\hrule}
            DiffPIR &  &  &  &  &  &  &  &  \\
        Regularization $\lambda$ 
        & $4\times10^{-4}$ & $[10^{-5},10^{-2}]$ 
        & $1$ & $[10^{-1},10^{4}]$ 
        & $8$ & $[10^{-1},10^{4}]$  
        & $8$ & $[10^{-1},10^{4}]$ \\
        Stochasticity $\eta$ 
        & $1$ & $[0,1]$ 
        & $1$ & $[0,1]$ 
        & $0.5$ & $[0,1]$  
        & $0.5$ & $[0,1]$ \\
        \bottomrule
    \end{tabular}
      \end{adjustbox}
    \label{tab: hyperparameter_tuning}
\end{table}

\begin{figure}
    \centering
    \includegraphics[width=1\linewidth]{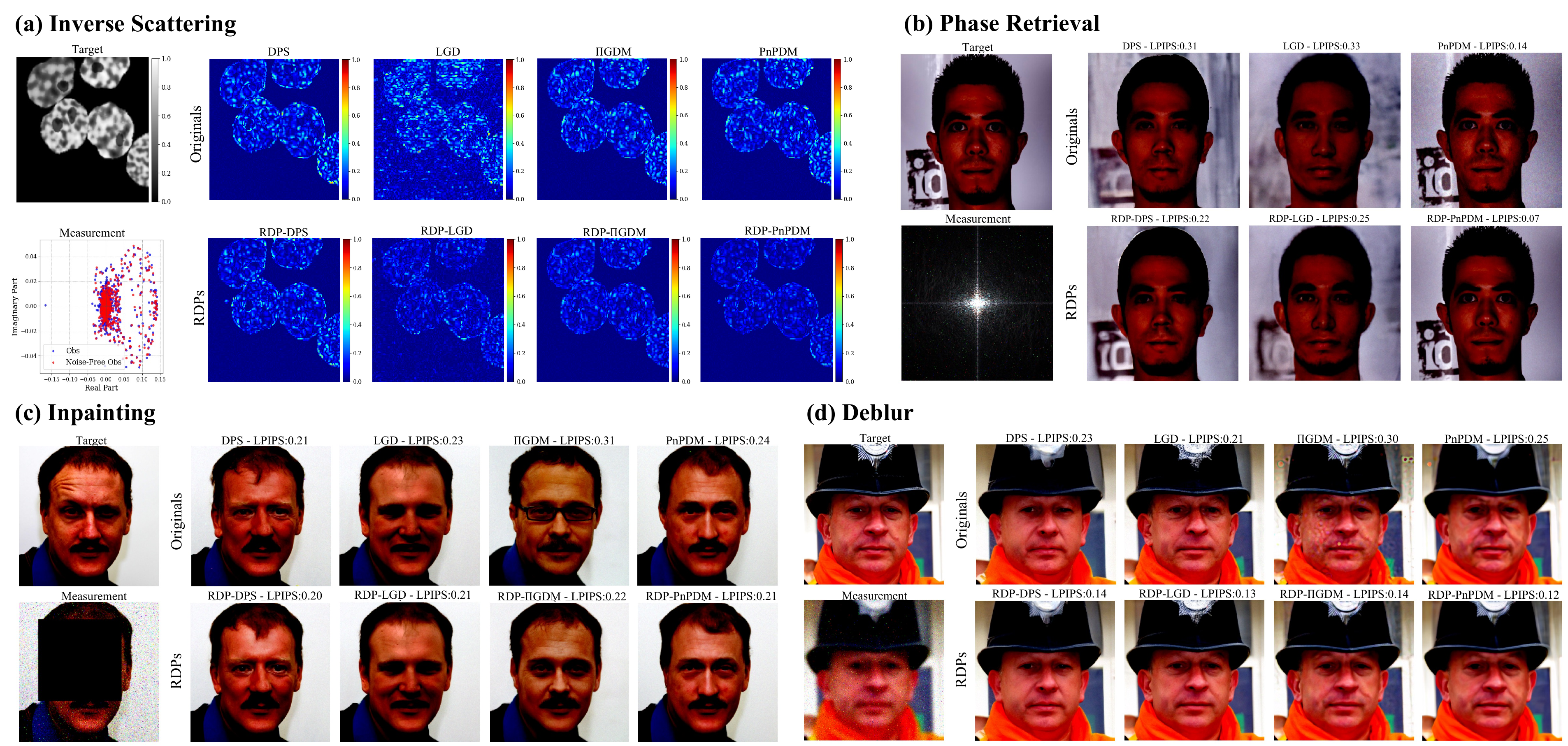}
    \caption{Qualitative results on all noise misspecification tasks: (a) Inverse Scattering, (b) Phase Retrieval, (c) Deblurring, and (d) Inpainting. In each panel, the leftmost column shows the ground truth and the contaminated observation. The remaining columns compare the original methods (top row) with their robustified versions using our sampler (bottom row); we report reconstruction absolute errors in (a) and reconstructed samples in (b)–(d).}
    \label{fig: combined_qual_figure_student_t}
\end{figure}

\begin{figure}[t]
  \centering
  \includegraphics[width=0.74\textwidth]{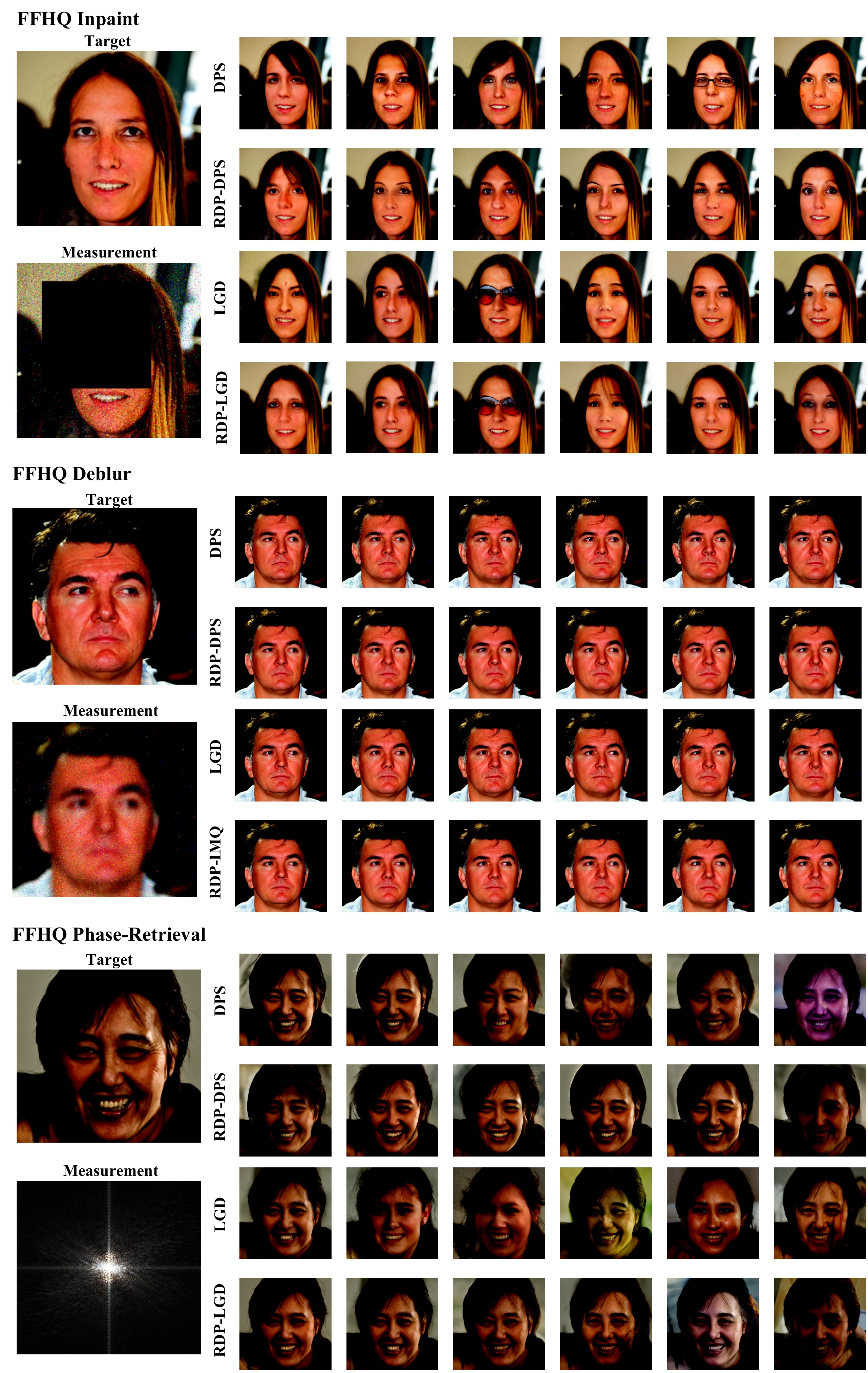}
  \caption{Multiple posterior samples from DPS and LGP with and without the RDPs on FFHQ, given heavy-tailed noises.}
\end{figure}

\begin{figure}[t]
  \centering
  \includegraphics[width=0.74\textwidth]{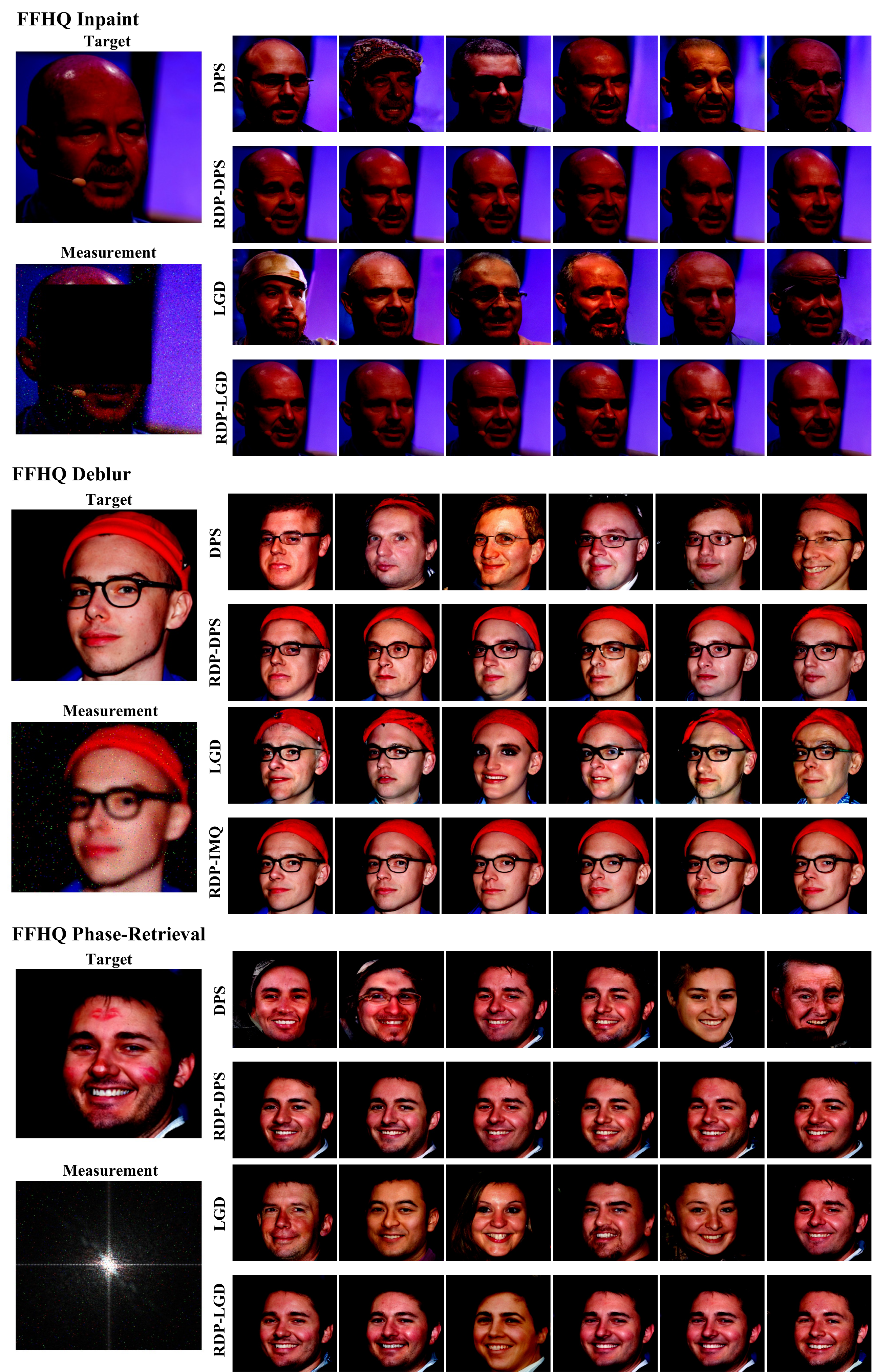}
  \caption{Multiple posterior samples from DPS and LGP with and without the RDPs on FFHQ, given observations corrupted by outliers.}
\end{figure}

\clearpage
\subsection{Additional Experimental Results}
\label{appendix: additional experimental results}
\paragraph{Performance under varying measurement dimension.} In practice, the measurement dimension \(d_y\), which determines how much information is available from observations, is often more influential for inverse problems than the target dimension \(d_x\). We conduct an ablation on inverse scattering (IS) task, where the number of receivers recording the scattered wavefield controls the amount of observed data and hence changes \(d_y\). Specifically, we evaluate \(d_y \in \{360,180,90\}\).

Under Gaussian noise, the gap between DPS and RDP-DPS remains small, consistent with the main results. In the other two settings, however, the advantage of RDP becomes more pronounced as \(d_y\) decreases. This suggests that robust guidance is more important when measurements are limited, since each corrupted measurement has a larger influence on the posterior update.

\begin{table}[htbp]
  \centering
  \vspace{-1em}
  \caption{Performances under different noise settings and receiver configurations}
  \label{tab:Performances under different noise settings and receiver configurations}
      \begin{adjustbox}{width=\textwidth} 
  \begin{tabular}{lcccccccccc}
    \toprule
    \multirow{2}{*}{Method} & \multirow{2}{*}{\makecell{Num of Recivers/\\$d_y$}} & \multicolumn{3}{c}{Well-specification (Gaussian)} & \multicolumn{3}{c}{Noise Misspecification (Student-T)} & \multicolumn{3}{c}{Contamination of Outliers} \\
    \cmidrule(lr){3-5} \cmidrule(lr){6-8} \cmidrule(lr){9-11}
    & & PSNR & SSIM & NMAE & PSNR & SSIM & NMAE & PSNR & SSIM & NMAE \\
    \midrule
    DPS     & 360/7200 & 26.26$\pm$2.83 & 0.86$\pm$0.02 & 0.16$\pm$0.02 & 21.53$\pm$2.93 & 0.78$\pm$0.06 & 0.25$\pm$0.05 & 19.37$\pm$3.44 & 0.71$\pm$0.10 & 0.33$\pm$0.05 \\
    RDP-DPS & 360/7200 & 26.01$\pm$2.72 & 0.86$\pm$0.02 & 0.15$\pm$0.01 & 23.84$\pm$2.59 & 0.82$\pm$0.03 & 0.19$\pm$0.04 & 25.24$\pm$2.74 & 0.83$\pm$0.02 & 0.15$\pm$0.02 \\
    \midrule
    DPS     & 180/3600 & 24.50$\pm$2.90 & 0.84$\pm$0.03 & 0.19$\pm$0.02 & 19.68$\pm$3.15 & 0.73$\pm$0.07 & 0.31$\pm$0.06 & 17.28$\pm$3.68 & 0.63$\pm$0.12 & 0.41$\pm$0.06 \\
    RDP-DPS & 180/3600 & 24.20$\pm$2.78 & 0.83$\pm$0.03 & 0.18$\pm$0.01 & 22.50$\pm$2.65 & 0.79$\pm$0.04 & 0.22$\pm$0.04 & 23.58$\pm$2.82 & 0.80$\pm$0.03 & 0.17$\pm$0.02 \\
    \midrule
    DPS     & 90/1800  & 22.76$\pm$3.05 & 0.81$\pm$0.04 & 0.23$\pm$0.03 & 17.56$\pm$3.42 & 0.68$\pm$0.08 & 0.37$\pm$0.07 & 15.76$\pm$3.95 & 0.56$\pm$0.13 & 0.47$\pm$0.07 \\
    RDP-DPS & 90/1800  & 22.40$\pm$2.86 & 0.80$\pm$0.04 & 0.22$\pm$0.02 & 20.98$\pm$2.78 & 0.75$\pm$0.05 & 0.26$\pm$0.05 & 22.54$\pm$2.95 & 0.77$\pm$0.04 & 0.20$\pm$0.03 \\
    \bottomrule
  \end{tabular}
  \end{adjustbox}
  \vspace{-0.5em}
\end{table}

\begin{figure}[ht]
    \centering
    \includegraphics[width=0.5\linewidth]{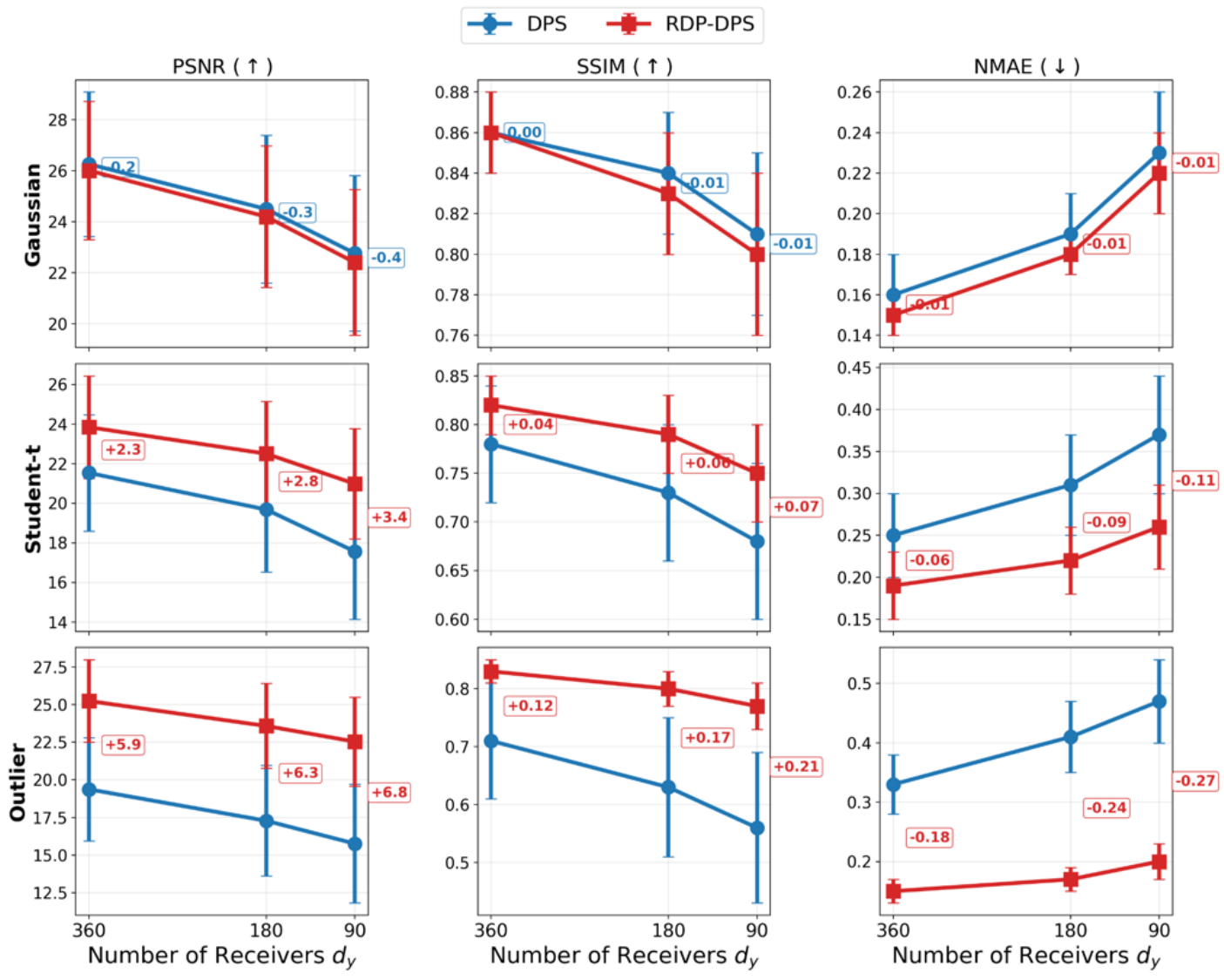}
    \caption{Recovery performance of DPS and RDP-DPS as the measurement dimension \(d_y\) varies under three noise settings on the IS task. RDP-DPS shows larger gains when measurements are limited and corrupted.}
    \label{fig:placeholder}
    \vspace{-1em}
\end{figure}

\vspace{-0.5em}
\paragraph{Performances under noise mismatch levels.} To evaluate a more practical form of model misspecification, we further consider moderate noise-level mismatch on the IS task. Specifically, the sampler assumes a Gaussian likelihood with noise standard deviation \(\sigma\), while the true measurement noise has standard deviation \(2\sigma\), \(4\sigma\), or \(8\sigma\). Results are reported in \Cref{tab:Performances under mismatch levels}.

The advantage of RDP-DPS increases as the mismatch level becomes larger. For example, RDP-DPS achieves comparable performance under \(2\times\) mismatch and provides clearer improvements under \(4\times\) and \(8\times\) mismatch, especially in NMAE. The gains are smaller than those observed under outlier contamination or Student-\(t\) noise, which is expected because noise-level mismatch affects all measurements uniformly rather than producing clearly distinguishable corrupted components. In this case, component-wise reweighting is less directly targeted, but it still mitigates the effect of an overconfident likelihood and provides consistent improvements.

\begin{table}[htbp]
  \centering
  \small
  \caption{Performances under mismatch levels.}
  \label{tab:Performances under mismatch levels}
  \begin{tabular}{lcccc}
    \toprule
    Method & $\sigma_y$ mismatch & PSNR & SSIM & NMAE \\
    \midrule
    DPS     & 2$\times$ & 23.64$\pm$2.27 & 0.81$\pm$0.03 & 0.23$\pm$0.01 \\
    RDP-DPS & 2$\times$ & 23.72$\pm$2.30 & 0.82$\pm$0.02 & 0.23$\pm$0.01 \\
    DPS     & 4$\times$ & 20.58$\pm$2.19 & 0.75$\pm$0.05 & 0.33$\pm$0.01 \\
    RDP-DPS & 4$\times$ & 20.88$\pm$2.16 & 0.76$\pm$0.04 & 0.30$\pm$0.02 \\
    DPS     & 8$\times$ & 17.94$\pm$2.30 & 0.69$\pm$0.07 & 0.40$\pm$0.03 \\
    RDP-DPS & 8$\times$ & 18.18$\pm$2.22 & 0.71$\pm$0.07 & 0.37$\pm$0.03 \\
    \bottomrule
  \end{tabular}
  \vspace{-1em}
\end{table}

\vspace{-0.5em}
\paragraph{Wall-clock time.}
We further evaluate the computational overhead introduced by RDP on the IS task. Since RDP only adds a component-wise residual weighting to the guidance term, it incurs negligible additional cost per sampling step. As shown in \Cref{tab:Wall-clock time per sample}, RDP-DPS has nearly identical wall-clock time to DPS.

\begin{table}[htbp]
  \centering
  \small
  \caption{Wall-clock time per sample on the IS task, averaged over 100 samples.}
  \label{tab:Wall-clock time per sample}
  \begin{tabular}{lccc}
    \toprule
    Method & Samples & Mean & Std \\
    \midrule
    DPS     & 100 & 30.2s & 1.4s \\
    RDP-DPS & 100 & 30.4s & 1.3s \\
    \bottomrule
  \end{tabular}
  \vspace{-1em}
\end{table}


\paragraph{Ablation study on weighting functions and quantiles. } The choice of robust weighting function can affect the recovery quality. To examine whether the performance of RDP depends critically on the specific IMQ weighting, we conduct ablations on the IS task over two design choices: the weighting function and the adaptive quantile \(q\).

In addition to IMQ, we evaluate a Huber-type weighting function:
\[
w(r) =
\begin{cases}
1, & |r| \leq c, \\[4pt]
c / |r|, & |r| > c.
\end{cases}
\]
where the transition threshold \(c\) is set adaptively as the 75th percentile of the residual magnitudes \(|r|\), following the same calibration strategy as IMQ.

The results show that IMQ and Huber weighting perform comparably under Gaussian and Student-\(t\) noise, while IMQ gives a slight advantage under outlier contamination. Since both weights use quantile-calibrated thresholds, they suppress a similar set of large-residual measurements, although IMQ provides a smoother transition.
\begin{table}[htbp]
  \centering
  \small
    \vspace{-1em}
  \caption{Ablation on weighting function on IS.}
  \label{tab:r2_weighting_ablation}
  \begin{tabular}{llccc}
    \toprule
    Weighting & Noise    & PSNR          & SSIM          & NMAE          \\
    \midrule
    IMQ       & Gaussian & 26.01$\pm$2.72 & 0.86$\pm$0.02 & 0.15$\pm$0.01 \\
    Huber     & Gaussian & 26.00$\pm$2.45 & 0.86$\pm$0.01 & 0.15$\pm$0.01 \\
    IMQ       & Student-T & 23.84$\pm$2.59 & 0.82$\pm$0.03 & 0.19$\pm$0.04 \\
    Huber     & Student-T & 23.82$\pm$2.41 & 0.81$\pm$0.02 & 0.20$\pm$0.01 \\
    IMQ       & Outliers & 25.24$\pm$2.74 & 0.83$\pm$0.02 & 0.15$\pm$0.02 \\
    Huber     & Outliers & 24.73$\pm$2.51 & 0.83$\pm$0.02 & 0.17$\pm$0.01 \\
    \bottomrule
  \end{tabular}
  \vspace{-1em}
\end{table}

We further evaluate the IMQ quantile \(q\in\{0.10,0.25,0.50,0.75,0.95\}\). Performance is stable for \(q\geq0.5\), whereas \(q=0.10\) is overly aggressive and degrades performance under outlier contamination by down-weighting useful clean measurements. These results indicate that RDP is not highly sensitive to the weighting design and support IMQ with \(q=0.75\) as a reasonable choice.
\begin{table}[htbp]
  \centering
  \small
  \caption{Ablation on quantile $q$ of IMQ on IS.}
  \label{tab:r3_quantile_ablation}
  \begin{adjustbox}{max width=\textwidth}
  \begin{tabular}{lcccccc}
    \toprule
    \multirow{2}{*}{$q$} & \multicolumn{3}{c}{Student-T} & \multicolumn{3}{c}{Outliers} \\
    \cmidrule(lr){2-4} \cmidrule(lr){5-7}
    & PSNR & SSIM & NMAE & PSNR & SSIM & NMAE \\
    \midrule
    0.10 & 23.47$\pm$2.64 & 0.81$\pm$0.04 & 0.21$\pm$0.04 & 22.63$\pm$2.81 & 0.80$\pm$0.03 & 0.19$\pm$0.02 \\
    0.25 & 23.59$\pm$2.60 & 0.82$\pm$0.03 & 0.20$\pm$0.03 & 24.96$\pm$2.76 & 0.82$\pm$0.02 & 0.16$\pm$0.02 \\
    0.50 & 23.81$\pm$2.58 & 0.82$\pm$0.03 & 0.19$\pm$0.04 & 25.24$\pm$2.76 & 0.83$\pm$0.02 & 0.15$\pm$0.02 \\
    0.75 & 23.85$\pm$2.59 & 0.82$\pm$0.03 & 0.19$\pm$0.04 & 25.24$\pm$2.74 & 0.83$\pm$0.02 & 0.15$\pm$0.02 \\
    0.95 & 23.83$\pm$2.60 & 0.82$\pm$0.04 & 0.19$\pm$0.04 & 25.25$\pm$2.73 & 0.83$\pm$0.02 & 0.15$\pm$0.02 \\
    \bottomrule
  \end{tabular}
  \end{adjustbox}
    \vspace{-2em}
\end{table}



\end{document}